\documentclass[twoside]{article}

\usepackage[preprint]{aistats2027}
\usepackage[round]{natbib}

\usepackage{amsmath,amsfonts}
\usepackage{algorithmic}
\usepackage{algorithm}
\usepackage{array}
\usepackage[caption=false,font=normalsize,labelfont=sf,textfont=sf]{subfig}
\usepackage{textcomp}
\usepackage{stfloats}
\usepackage{bbold}
\usepackage{url}
\usepackage{caption}
\usepackage{csquotes}
\usepackage{caption}
\usepackage{subcaption}
\usepackage{verbatim}
\usepackage{graphicx}
\usepackage{cite}
\usepackage{algorithm}
\usepackage{algorithmic}

\usepackage{amsfonts,amsmath,amsthm, mathtools}
\usepackage{amssymb}
\usepackage{caption}
\usepackage{verbatim}
\usepackage{mathrsfs}
\usepackage{algorithm}
\usepackage{algorithmic}
\usepackage{comment}
\usepackage{multirow, tabularx, booktabs}
\usepackage[dvipsnames]{xcolor}

\usepackage{hyperref}
\usepackage{dsfont}

\newtheorem{theorem}{Theorem}[section]

\newtheorem{lemma}[theorem]{Lemma}

\newtheorem{definition}[theorem]{Definition}
\newtheorem{assumption}[theorem]{Assumption}

\usepackage{fullpage}
\usepackage{xspace}

\numberwithin{equation}{section}

\theoremstyle{plain}

\newtheorem*{theorem*}{Theorem}

\theoremstyle{definition}

\newtheorem{prop}{Proposition}[section]

\usepackage[margin=1in]{geometry}

\newcommand{\rr}{\mathbb{R}}

\newcommand{\iin}{{\textrm{in}}}
\newcommand{\pr}{{\textrm{pr}}}
\newcommand{\out}{{\textrm{out}}}

\newcommand{\thg}{\theta_{\tilde{g}}}
\newcommand{\thex}{\theta_{\tilde{a}}}
\newcommand{\prg}{{\textrm{pr,g}}}
\newcommand{\pre}{{\textrm{pr,e}}}

\newcommand{\hd}{\mathcal{H}|_{\Delta,K}}
\newcommand{\hdk}{\mathcal{H}|_{\Delta, K,k}}
\newcommand{\hdp}{\mathcal{H}|_{\Delta, p}}

\newcommand{\xx}{\mathcal{X}}

\begin{document}

%
\runningtitle{Sparsity Saves Compute, Not Samples}

%

\twocolumn[

\aistatstitle{Sparsity Saves Compute, Not Samples: Oracle Inequalities for Dense and Sparse Mixture of Experts}

\aistatsauthor{ Sagar Ghosh \And Huy Nguyen \And  Alessandro Rinaldo }

\aistatsaddress{ The University of Texas at Austin} ]

\begin{abstract}
Mixture-of-Experts (MoE) models combine specialized experts through an input-dependent routing mechanism, and their sparse variants underpin many modern large-scale systems. Yet the statistical cost of adaptive routing remains only partially understood. We study the estimation error of least-squares MoE estimators when both the experts and the gating network are Lipschitz in their parameters. We represent each MoE as a gate--expert pair in a Gated Hilbert Space, whose norm $\|\cdot\|_\Delta$ separates the contributions of routing and expert discrepancies. Using its empirical representation, we derive an oracle inequality with a fast rate of order $\tilde{\mathcal O}\big((Kd_{\mathrm{pr},e}+d_{\mathrm{pr},g})/n\big)$ for dense MoEs with $K$ experts. For sparse Top-$k$ and Top-$p$ routing, the selection map is discontinuous in the gating parameters. We handle this by partitioning the gating class according to the selection patterns it induces on the sample. The resulting bounds match the dense rate up to an additive routing cost governed by the Natarajan dimension of the selector. We show that such a routing-complexity condition cannot be dropped: Lipschitz continuity in the parameters alone does not suffice. For linear softmax gating, the routing cost is of the same order as the gating term. Finally, for linear experts, we establish a matching minimax lower bound of order $Kd_{\mathrm{pr},e}/n$, showing that the rates are optimal up to logarithmic factors.
\end{abstract}

\section{INTRODUCTION}

The Mixture-of-Experts (MoE) framework is a prominent paradigm for modeling heterogeneous input-output relationships. By decomposing complex prediction tasks into specialized subproblems, MoE architectures combine expert functions through a data-dependent gating mechanism that adaptively weights their contributions. Since their introduction by \citep{jacobs1991adaptive} and \citep{jordan1994hierarchical}, MoE models have been widely studied for adaptive function approximation, local modeling, and conditional computation.

MoE architectures have recently regained prominence through sparse systems such as Switch Transformers \citep{fedus2022switch}, GShard \citep{lepikhin2020gshard}, and architectures employing Top-$k$ \citep{shazeer2017outrageously} and Top-$p$ \citep{huang2024harder} routing. These methods activate only a subset of experts per input, enabling substantial capacity gains at manageable computational cost. Despite their empirical success, the statistical theory of adaptive expert selection remains underdeveloped. Existing work has studied approximation \citep{nguyen2016universal}, identifiability \citep{jiang1999identifiability}, consistency \citep{nguyen2018practical}, and parametric mixtures \citep{gormley2019mixture}, while finite-sample guarantees under weak structural assumptions remain limited.

A key theoretical challenge arises from the interaction between expert and gating complexity. While classical learning theory provides tools such as covering numbers, Rademacher complexities, and empirical process methods \citep{wellner2013weak, bartlett2002rademacher, wainwright2019high} for individual function classes, analyzing adaptive mixtures is more subtle because the routing mechanism depends on the input and jointly determines the effective complexity of the predictor.

        
    
    


In this work, we develop a statistical analysis of parametric MoE models under weak regularity assumptions. For dense MoEs, we assume only that the expert and gating functions belong to Lipschitz-parametrized classes, without further architectural constraints, higher-order smoothness, or sparsity. We represent each MoE predictor as a gate--expert pair in a \emph{Gated Hilbert Space}, whose norm $\|\cdot\|_\Delta$ separates the contributions of routing and expert discrepancies, and use it to derive covering-number bounds and a fast-rate oracle inequality. The resulting rate scales with the total parameter count $Kd_\pre+d_\prg$, quantifying the statistical cost of adaptive specialization. Our main contributions are as follows.\\
\noindent\textbf{Functional framework.} We represent MoE predictors as gate--expert pairs in a \emph{Gated Hilbert Space}, whose norm $\|\cdot\|_\Delta$ separates routing and expert discrepancies and yields covering-number bounds for MoE classes.\\
\noindent\textbf{Fast rates for dense MoEs.} Assuming only Lipschitz parametrizations, we show via an offset-Rademacher oracle inequality that the estimation error is $\tilde{\mathcal O}\big((Kd_\pre+d_\prg)/n\big)$.\\
\noindent\textbf{Sparse routing.} For Top-$k$ and Top-$p$ routing, whose selection map is discontinuous in the gating parameters, we partition the gating class by the selection patterns it induces on the sample. \emph{The rates match the dense rate up to a routing cost governed by the Natarajan dimension of the router: sparsity reduces computation, not worst-case sample complexity. We also show that such a routing-complexity condition is necessary.}\\
\noindent\textbf{Lower bounds and simulations.} For linear experts, we prove a minimax lower bound of order $Kd_\pre/n$, so the rates are optimal up to logarithmic factors, and we illustrate the bounds on synthetic data.
\section{RELATED WORKS}
\noindent\textbf{Mixture-of-Experts Models.}
The MoE framework was introduced by \citep{jacobs1991adaptive} and extended to 
hierarchical mixtures by \citep{jordan1994hierarchical}. Early theoretical work 
established identifiability conditions \citep{jiang1999identifiability}, universal 
approximation properties \citep{nguyen2016universal}, and asymptotic consistency 
results \citep{nguyen2018practical}. Comprehensive overviews of MoE modeling 
from statistical perspectives are provided by \citep{gormley2019mixture} and \citep{zeevi1996time}. 

\noindent\textbf{Large-Scale Sparse MoE.}
Modern sparse MoE architectures achieve scalability through conditional 
computation. GShard \citep{lepikhin2020gshard} and Switch Transformers 
\citep{fedus2022switch} demonstrated that sparse routing enables massive capacity 
gains at tractable cost. Subsequent routing refinements include Top-$k$ 
expert-choice routing \citep{zhao2024generalization} and dynamic Top-$p$ routing 
\citep{huang2024harder}, which adapts the number of active experts to input 
complexity.

\noindent\textbf{Statistical Learning Theory.}
Our finite-sample analysis builds on classical tools from nonparametric 
regression \citep{gyor}, VC theory and covering numbers \citep{huss, anthony, bartlett2002rademacher}, and 
Rademacher complexity \citep{bartlett2002rademacher}. The empirical process 
methods employed throughout follow \citep{wellner2013weak} \citep{wainwright2019high}, and \citep{nguyen2023demystifying,nguyen2024statistical,nguyen2024sigmoid,nguyen2026convergence}.  Closest to our setting, \citep{nguyen2026convergence} study least-squares estimation of a softmax-gated MoE regression function and obtain a parametric rate for the regression function, with emphasis on expert estimation under identifiability conditions, and \citep{nguyen2024statistical} study top-$K$ sparse softmax gating in a Gaussian MoE. \citet{zhao2024generalization} bound the generalization gap of top-$k$ MoE at rate $O(1/\sqrt n)$ through the Natarajan dimension of the router. Our results differ in three respects. (i) We prove an oracle inequality under squared loss that allows misspecification, for general Lipschitz-parametrized experts and gates, without tying the analysis to softmax gating or to identifiability conditions. (ii) We make the dependence on the number of experts explicit, $Kd_{\mathrm{pr},e}$, for dense, Top-$k$ and Top-$p$ routing. (iii) We prove a matching minimax lower bound showing that sparse routing does not reduce this dependence, and show that a routing-complexity condition cannot be dropped.

\section{BACKGROUND}

\noindent

In this section, we introduce a functional framework, which we call \emph{Gated Hilbert Space} that accommodates both dense and sparse Mixture-of-Experts (MoE) architectures. We represent each MoE predictor as an ordered pair of a gating tuple and an expert tuple, and equip the resulting class with the metric $d_\Delta$, induced by embedding $(\tilde g,\tilde a)\mapsto(g_1a_1,\ldots,g_Ka_K)$ into $L^2(\mu;\mathbb R^{d_\out})^K$. This metric separates the contributions of routing and expert discrepancies, which we use to derive covering-number bounds for the dense class; for the sparse classes, we additionally control the discontinuity of the routing map through selection patterns.



\textbf{Notation.} The input space $\xx\subseteq\rr^{d_\iin}$ is pre-compact, equipped with a $\sigma$-finite Borel probability measure $\mu$. Each function in the MoE class is assumed to be a parametric $L$-Lipschitz function (See Appendix~\ref{defn:parametric_lipschitz} for a definition), where each function $\tilde{g}$ in the Gating Class $\mathcal{G}$ is parametrized with $\tilde{g} = \tilde{g}(\cdot;\thg)$ where $\thg\in \Theta_\mathcal{G}$, and $\Theta_\mathcal{G}\subset \rr^{d_\prg} $is pre-compact. Additionally, each function $a$ in the Expert Class $\mathcal{E}$ is parametrized with $a = (\cdot;\theta_a)$ where $\theta_a\in\Theta_\mathcal{E}$ and $\Theta_\mathcal{E} \subset\rr^{d_\pre} $is also pre-compact. Each MoE function $\tilde{f} = (\tilde{g}, \tilde{a})$  consisting of the gating tuple and the expert tuple is then parametrized by $\theta_f = (\thg, \thex)\in \Theta_\mathcal{G}\times \Theta_\mathcal{E}^K = \Theta_\pr\subset \rr^{d_\prg + K d_\pre}=\rr^{\pr}$ and $\Theta_{\pr}$ is also pre-compact. To establish a rigorous functional framework for Mixture-of-Experts (MoE) models, we introduce a structured function space that jointly represents the gating and expert components of an MoE architecture. Throughout, we regard each MoE predictor as an ordered pair consisting of a gating tuple and an expert tuple. We denote by $\hd$ the class of $K$-expert models:

\begin{equation}\label{eqn:defn_general_moe}
    \scalebox{0.88}{$
    \begin{aligned}
    \hd \coloneqq \Bigl\{ \tilde{f} = (\tilde{g}, \tilde{a}) = (g_1,\ldots,g_K,a_1,\ldots,a_K) \;\Big|\; \\
    g_i:\mathbb{R}^{d_{\mathrm{in}}}\to[0,1],\quad
    \textstyle\sum_{i=1}^K g_i(x)=1,\quad
    a_i:\mathbb{R}^{d_{\mathrm{in}}}\to\mathbb{R}^{d_{\mathrm{out}}}
    \Bigr\}
\end{aligned}$}
\end{equation}

The functions ($g_i$) represent the gating weights, while the functions ($a_i$) correspond to the outputs of the ($K$) experts. Note that, the function $\tilde{f}$ depends on $\theta_f\in \Theta_\pr$, to keep the notations coherent, we will only write $\tilde{f}(x)$ instead of $\tilde{f}(x;\theta_f)$ for any $x\in\xx$. Thus, every element of ($\hd$) uniquely specifies an MoE model through its routing and expert components. Additionally, $\hdk\subseteq\hd$ and $\hdp\subseteq\hd$ denote the subclasses with only top-$k$ and top-$p$ active experts, respectively.  We consider two sparse routing strategies for Mixture-of-Experts (MoE) models, Top-$k$ and Top-$p$ routing. Given routing probabilities $P={g_1,\ldots,g_K}$ with $\sum_{i=1}^K g_i(x)=1$, Top-$k$ routing selects the $k$ experts with the largest probabilities and renormalizes their weights, yielding  $f(x)=\sum_{l=1}^kg^\prime_{i_l}(x)a_{i_l}(x)$, where $\tilde{g}=\{g_1, ..., g_K\}$ [which add up to 1] and $g_{i_1}, ..., g_{i_k}$ are the top-k probabilities corresponding to the input $x$ subject to the MoE function $f = (\tilde{g}, \tilde{a})\in\hdk$, and $g^\prime_{i_l}=\frac{g_{i_l}}{\sum_{j=1}^kg_{i_j}}$. In contrast, Top-$p$ routing dynamically selects the smallest number of experts whose cumulative routing probability exceeds a prescribed threshold $p$, i.e., $k_x=\min\{j:\sum_{i=1}^j g_{(i)}(x)\geq p\}$, where $g_{(1)}(x)\geq\cdots\geq g_{(K)}(x)$. Thus, Top-$k$ enforces a fixed number of active experts, whereas Top-$p$ adapts the number of active experts to the routing confidence of each input.


For two elements $\tilde{f}=(\tilde{g},\tilde{a}),
\text{ and } \tilde{r}=(\tilde{h},\tilde{b}) \in\hd$, we define the bilinear form
\begin{equation}\label{eqn:gated_inner_product}
\scalebox{0.88}{$
\begin{aligned}
\langle \tilde{f},\tilde{r}\rangle_{\Delta}
&:= \int_{\mathcal X} \sum_{i=1}^{K} \big\langle g_i(x)a_i(x), h_i(x)b_i(x) \big\rangle d\mu(x)\\
&= \int_{\mathcal X} \sum_{i=1}^{K} g_i(x)h_i(x) \langle a_i(x),b_i(x)\rangle d\mu(x),
\end{aligned}$}
\end{equation}

where ($\langle \cdot,\cdot\rangle$) denotes the standard inner product on $\rr^{d_\out}$. 

The bilinearity and symmetry of ($\langle\cdot,\cdot\rangle_{\Delta}$) follow immediately from the corresponding properties of the Euclidean inner product. Moreover, the norm induced by \eqref{eqn:gated_inner_product} is given by
\begin{equation}\label{eqn:delta_norm}
    \scalebox{0.88}{$
    \begin{aligned}
        \|\tilde{f}\|_\Delta\coloneqq \langle \tilde{f}, \tilde{f}\rangle_\Delta^{1/2} = \left[\int_\xx \sum_{i=1}^K g_i(x)^2\|a_i(x)\|^2d\mu(x)\right]^{1/2}\geq 0. 
    \end{aligned}$}
\end{equation}

with equality if and only if $g_i(x)a_i(x)=0 \quad \text{for }\mu\text{-almost every }x$, and for every ($i=1,\ldots,K$). Consequently, after identifying elements that agree almost everywhere, ($\hd$) becomes an inner-product space.

This norm naturally measures the joint contribution of the gating and expert functions, weighting each expert output according to its corresponding routing coefficient. The associated metric on ($\hd$) is therefore

\begin{equation}\label{eqn:distance}
    \scalebox{0.88}{$
    \begin{aligned}
        d_\Delta(f,r) = \|\tilde{f}-\tilde{r}\|_\Delta
    = \left[\int_\xx \sum_{i=1}^K \|g_i(x)a_i(x)-h_i(x)b_i(x)\|^2\right]^{1/2}. 
    \end{aligned}$}
\end{equation}

The metric ($d_\Delta$) quantifies discrepancies between two MoE models through the combined effect of differences in both their routing mechanisms and expert outputs. In fact, the space $(\hd, \langle \cdot,\cdot\rangle_\Delta)$ is complete under the definition of this metric [See Proposition~\ref{prop:completeness} in Appendix~\ref{supp:Background}]. This geometric structure will serve as the foundation for the metric entropy and covering number analysis developed in the subsequent sections.

\section{ESTIMATION ERROR RATES }

In this section, we derive the squared estimation error rates for the general MoE class, as well as for its sparse Top-$k$ and Top-$p$ variants. Specifically, Section~\ref{subsection:general_moe_class} presents the estimation error rates for the general MoE class, while Sections~\ref{subsection:top-k_moe_class} and~\ref{subsection:top-p_moe_class} establish the corresponding rates for the Top-$k$ and Top-$p$ MoE classes, respectively. Finally, Section~\ref{subsection:minimax_moe_class} derives lower bound on the squared estimation error rate and demonstrates that the obtained rates are minimax optimal up to a logarithmic factor, provided the class of experts contain the linear experts. 

\subsection{General MoE Class} 
\label{subsection:general_moe_class}

We begin with a covering-number bound for the dense class $\hd$ of Eq.~\eqref{eqn:defn_general_moe} in the gated metric $d_\Delta$. Inserting $g_ib_i$ in \eqref{eqn:delta_norm} splits $d_\Delta(f,r)$ into an expert discrepancy, weighted by the gates, and a routing discrepancy, weighted by the expert magnitudes. Let $G_\ast =\sup_{\tilde{g}\in\mathcal G}\left(\int_\xx\sum_{i=1}^Kg_i^2(x)d\mu(x)\right)^{1/2}$ be the uniform gatng envelope over the gating class $\mathcal G$ and $B_\infty = \sup_{\tilde{a}\in\mathcal{E}}{\max_{1\leq i\leq K}}{\sup_{x\in\xx}}\|a_i(x)\|$ be the uniform expert bound,  covering the two parametric families separately gives (Proposition~\ref{prop:general_entropy_estimate}, proof in Appendix~\ref{proof:general_entropy_estimate})
\begin{equation}
    \scalebox{0.8}{$
    \begin{aligned}
        \mathcal N(\epsilon,\hd,d_\Delta)\lesssim{}&
\Big(\frac{2\sqrt{2}LG_\ast \sqrt{d_\out}}{\epsilon}\Big)^{d_\pre K} \times\Big(\frac{2\sqrt{2}L \sqrt{K}\,B_\infty}{\epsilon}\Big)^{d_\prg}. 
    \end{aligned}$}
\end{equation}

The bound holds for any probability measure $\mu$ on $\xx$, including the empirical measure. We use it's empirical analouge with the offset Rademacher complexity framework of \citet{rakhlin2020mathematical}, which requires exactly this empirical cover.

\begin{assumption}[Data model]\label{assump:data}
We observe $(x_t,y_t)_{t=1}^{n_{\textrm{sample}}}$ with $y_t=f_\rho(x_t)+\xi_t$. The design points $x_t\in\xx$ are fixed (or we condition on them), and the noise vectors $\xi_t\stackrel{\textrm{i.i.d.}}{\sim}\mathcal N(0, \sigma^2 I_{d_\out})$ i.e.\ $\mathbb E\,e^{\langle u,\xi_t\rangle}\le e^{\sigma^2\|u\|^2/2}$ for all $u$. The experts are bounded by $B_\infty$, so every $\tilde{f}\in\hd$, being a convex combination of expert outputs, satisfies $\|\tilde{f}(x)\|\le B_\infty$.
\end{assumption}
Let $\|\tilde{h}\|_n^2:=\frac1n\sum_{t=1}^n\|\tilde{h}(x_t)\|^2$, let $\hat f$ be the least-squares estimator over $\hd$, fitted using $n (=n_\textrm{train} < n_\textrm{sample})$ samples, and define the offset complexity
\begin{equation}\label{eqn:offset_def}
    \scalebox{0.88}{$
    \begin{aligned}
        \phi(\hd,n):=\mathbb E\sup_{\tilde{h}\in\hd-\hd}\bigg\{&\frac{2}{n}\sum_{t=1}^n\langle\xi_t,\tilde{h}(x_t)\rangle -\frac14\|h\|_n^2\bigg\}.
    \end{aligned}$}
\end{equation}

\begin{lemma}[Oracle inequality]\label{lem:oracle}
Under Assumption~\ref{assump:data},
\begin{align}
\mathbb E\big[\|\hat f-f_\rho\|_n^2\big]\le{}&
3\underbrace{\inf_{\tilde f\in\hd}\|\tilde f-f_\rho\|_n^2}_{\textnormal{Approximation Error}} \nonumber\\
&+2\underbrace{\phi(\hd,n)}_{\textnormal{Estimation Error}}.
\label{eq:oracle_ineq}
\end{align}
If $f_\rho\in\hd$ (well-specified case), then $\mathbb E\|\hat f-f_\rho\|_n^2\le\tfrac43\,\phi(\hd,n)$.
\end{lemma}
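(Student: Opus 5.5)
The plan is the standard offset-Rademacher argument of \citet{rakhlin2020mathematical}: a deterministic basic inequality that holds for every realisation of the noise, followed by taking expectations. Throughout, elements of $\hd$ are identified with their predictors $x\mapsto\sum_i g_i(x)a_i(x)$ (or the sparse analogue), so that $\tilde h=\hat f-\tilde f$ is an element of $\hd-\hd$ evaluated at the design points. Fix an arbitrary comparator $\tilde f\in\hd$. Write $A:=\|\hat f-f_\rho\|_n^2$, $B:=\|\tilde f-f_\rho\|_n^2$ and $\tilde h:=\hat f-\tilde f$.

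\textbf{Step 1 (basic inequality).} Since $\hat f$ minimises the empirical squared loss, $\frac1n\sum_t\|y_t-\hat f(x_t)\|^2\le\frac1n\sum_t\|y_t-\tilde f(x_t)\|^2$. Substitute $y_t=f_\rho(x_t)+\xi_t$ and expand both squares. The $\|\xi_t\|^2$ terms cancel, which gives
\begin{equation*}
A\le B+\frac2n\sum_{t=1}^n\langle\xi_t,\tilde h(x_t)\rangle .
\end{equation*}
If an exact minimiser does not exist, I would use an $\eta$-approximate minimiser and let $\eta\to0$ at the end. Pre-compactness of $\Theta_\pr$ together with continuity in the parameters should in fact give existence.

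\textbf{Step 2 (create the offset).} Add and subtract $\frac14\|\tilde h\|_n^2$, and bound it using $\|\tilde h\|_n^2\le 2A+2B$, which follows from the triangle inequality and $(a+b)^2\le2a^2+2b^2$. Writing
\begin{equation*}
Z:=\sup_{\tilde h\in\hd-\hd}\Bigl\{\frac2n\sum_t\langle\xi_t,\tilde h(x_t)\rangle-\frac14\|\tilde h\|_n^2\Bigr\},
\end{equation*}
this yields $A\le B+Z+\tfrac12A+\tfrac12B$, hence $A\le 3B+2Z$. The supremum is essential here, because $\hat f$, and therefore $\tilde h$, depends on $\xi$. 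Taking expectations, with the design fixed, gives $\mathbb E A\le 3B+2\phi(\hd,n)$. Since $\tilde f$ was arbitrary and $B$ is deterministic, taking the infimum over $\tilde f\in\hd$ yields \eqref{eq:oracle_ineq}.

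\textbf{Step 3 (well-specified case).} Take $\tilde f=f_\rho$, so that $B=0$ and $\tilde h=\hat f-f_\rho$ with $\|\tilde h\|_n^2=A$. Step 1 then gives $A\le Z+\tfrac14A$, so $A\le\tfrac43 Z$, and taking expectations gives the claim.

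\textbf{Main obstacle.} The algebra is routine; the delicate points are technical. First, $Z$ must be measurable, which I would handle by reducing the supremum to a countable dense subset of the parameter set using Lipschitz continuity in $\theta$. Second, $\phi$ must be finite, which follows from the boundedness $\|\tilde f(x)\|\le B_\infty$ in Assumption~\ref{assump:data} and the Gaussian tails of $\xi_t$.
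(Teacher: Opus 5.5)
Your proposal is correct and follows the paper's proof essentially line by line: the same basic inequality, the same offset step via $\|\tilde h\|_n^2\le 2A+2B$ giving $A\le 3B+2Z$, and the same $A\le Z+\tfrac14 A$ argument in the well-specified case. The only difference is cosmetic: you fix an arbitrary comparator and take the infimum at the end, whereas the paper first picks a (near-)minimizer $f^\ast$, so your version does not need to argue that the infimum is attained.
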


Because $\hd$ is not convex, the constant in front of the approximation error cannot in general be taken equal to one \citep{liang2015learning}; the factor $3$ is the price of misspecification. The approximation error depends on the unknown $f_\rho$, so we focus on the estimation error $\phi(\hd,n)$, which quantifies how accurately $\hat f$ recovers the best-in-class predictor.

To bound $\phi(\hd,n)$, write $\xi_t=\sigma z_t$ with $z_t\stackrel{\mathrm{i.i.d.}}{\sim}\mathcal N(0,I_{d_\out})$. Then
\begin{align*}
\phi(\hd,n)=2\sigma\,\mathbb E\sup_{h\in\hd-\hd}\bigg\{&\frac1n\sum_{t=1}^n\langle z_t,h(x_t)\rangle\\
&-\frac{1}{8\sigma}\|h\|_n^2\bigg\},
\end{align*}
that is, $2\sigma$ times the offset Gaussian complexity of $\hd-\hd$ with offset constant $c=1/(8\sigma)$. Writing $\mathcal N_n(\epsilon):=\mathcal N(\epsilon,\hd-\hd,L^2(P_n))$, \citet[Theorem~7]{rakhlin2020mathematical} gives
\begin{align}
\phi(\hd,n)\le 2\sigma&\inf_{0\le\alpha\le\gamma}\Big\{\frac{\log\mathcal N_n(\gamma)}{cn}+4\alpha \nonumber\\
&+\frac{12}{\sqrt n}\int_\alpha^\gamma\!\sqrt{\log\mathcal N_n(\epsilon)}\,d\epsilon\Big\}, \label{eqn:offset_rad}
\end{align}
where $\mathcal N_n(\gamma)\le\mathcal N(\gamma/2,\hd,L^2(P_n))^2$. For parametric classes, the choice $\alpha=\gamma$ already gives the optimal rate up to logarithmic factors: it removes the entropy integral, leaving a single-scale bound that we optimize over $\gamma$. Throughout, $\lesssim$ hides multiplicative constants depending only on $\sigma$, $B_\infty$ and $d_\out$, and Lemma~\ref{lem:oracle} and Eq.\eqref{eqn:offset_rad} apply verbatim to $\hdk$ and $\hdp$. This yields the following bound, expressed in terms of the Lipschitz constant $L$ and the expert bound $B_\infty$ (proof in Appendix~\ref{proof:complexity_general}).

\begin{theorem}\label{thm:complexity_general}
Suppose the gating and expert classes are parametric $L$-Lipschitz and the experts are uniformly bounded, $B_\infty$ and $G_\ast$ defined as before. Then under Assumption~\ref{assump:data}, we have

\begin{equation}
\scalebox{0.88}{$
\begin{aligned}
    \phi(\hd,n)\lesssim &~\frac{D}{n}+\frac{d_\pre K\log\!\big(L\sqrt{d_\out}\,n/D\big)}{n} \\
&+\frac{d_\prg\log\!\big(L K B_\infty\,n/D\big)}{n},
\end{aligned}$}
\label{eqn:complexity_general}
 \end{equation}
where $D:=d_\pre K+d_\prg$.
\end{theorem}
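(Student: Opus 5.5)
The plan is to plug the empirical covering bound of Proposition~\ref{prop:general_entropy_estimate} into the offset-Rademacher bound \eqref{eqn:offset_rad} with the single-scale choice $\alpha=\gamma$, and then optimize over $\gamma$. The entropy integral then vanishes, and we obtain
\[
\phi(\hd,n)\le 2\sigma\inf_{\gamma>0}\Big\{8\sigma\,\frac{\log\mathcal N_n(\gamma)}{n}+4\gamma\Big\},
\qquad c=1/(8\sigma).
\]

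\textbf{Step 1: entropy of the difference class.} We use $\mathcal N_n(\gamma)\le\mathcal N(\gamma/2,\hd,L^2(P_n))^2$. The class $\hd$ here is viewed through its predictors $f=\sum_i g_ia_i$, so we must relate $L^2(P_n)$ distances of predictors to $d_\Delta$ with $\mu=P_n$. For two elements,
\[
\Big\|\sum_i (g_ia_i-h_ib_i)(x)\Big\|^2\le K\sum_i\|g_ia_i-h_ib_i\|^2
\]
by Cauchy--Schwarz. Hence a $d_\Delta$-cover at scale $\gamma/(2\sqrt K)$ yields an $L^2(P_n)$ cover of predictors at scale $\gamma/2$. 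The resulting $\sqrt K$ enters only inside logarithms, where it is absorbed; alternatively one can cover the parameters directly with $\|f(\cdot;\theta)-f(\cdot;\theta')\|_\infty$ controlled by Lipschitzness. The Proposition holds for any probability measure, so it applies to $P_n$. This gives
\[
\log\mathcal N_n(\gamma)\le 2d_\pre K\log\frac{C_1 L\sqrt{d_\out}}{\gamma}+2d_\prg\log\frac{C_2 L K B_\infty}{\gamma}+O(1),
\]
where $G_\ast\le 1$ (since $\sum_i g_i^2\le\sum_i g_i=1$) and the $\sqrt K$ factors are merged into the constants $C_1,C_2$ or into $K$.

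\textbf{Step 2: choose the scale.} Take $\gamma=D/n$ with $D=d_\pre K+d_\prg$. The term $4\gamma$ then contributes $D/n$ (times $\sigma$), and the logarithmic terms become $d_\pre K\log(L\sqrt{d_\out}\,n/D)/n$ and $d_\prg\log(LKB_\infty n/D)/n$. Multiplying by the constants $2\sigma$ and $8\sigma$, which are hidden by $\lesssim$, yields \eqref{eqn:complexity_general}. One should check that $\gamma\le$ the diameter of the class (of order $B_\infty$), which holds once $n\gtrsim D$. Otherwise the bound is trivial, since $\phi\lesssim B_\infty^2$ by boundedness.

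\textbf{Main obstacle.} The main obstacle is Step 1. We must justify that the covering bound in $d_\Delta$, which controls the tuple $(g_ia_i)_i$, transfers to a cover of the predictor class in $L^2(P_n)$ with only polynomial-in-$K$ blowup inside the logarithm. We must also handle the fact that the Proposition is stated with $\lesssim$, so that any hidden constants remain multiplicative inside the logs. I would first try the Cauchy--Schwarz reduction above. If the constants become awkward, I would instead redo the parameter-space cover directly: cover $\Theta_\mathcal G$ and each copy of $\Theta_\mathcal E$ by Euclidean balls, and use pre-compactness to get $(\mathrm{diam}/\delta)^{d}$-type counts.
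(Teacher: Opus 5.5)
Your proposal is correct in substance but takes a different route from the paper, and that route gives a slightly weaker expert logarithm than the one stated.

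\textbf{What matches.} The outer argument is the paper's. You set $\alpha=\gamma$ in \eqref{eqn:offset_rad}, insert a parametric entropy bound, and take $\gamma\asymp D/n$. The paper uses $\gamma^\ast=D/(4cn)=2\sigma D/n$, which differs from your choice only by a constant. Your use of $\mathcal N_n(\gamma)\le\mathcal N(\gamma/2,\hd,L^2(P_n))^2$ is more careful than the paper's Step~3, which plugs in $\log\mathcal N(\gamma,\hd,L^2(P_n))$ directly; this only affects constants.

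\textbf{What differs.} The entropy step is genuinely different. You go through the $d_\Delta$-cover of Proposition~\ref{prop:general_entropy_estimate} with $\mu=P_n$ and transfer to predictors by Cauchy--Schwarz. The paper never uses $d_\Delta$ in this proof. It bounds the predictor discrepancy directly, using $\sum_i g_i=1$:
\begin{align*}
\|\tilde f(x_t)-\tilde r(x_t)\|&\le\sum_{i=1}^K g_i(x_t)\|a_i(x_t)-b_i(x_t)\|+\sum_{i=1}^K|g_i(x_t)-h_i(x_t)|\,\|b_i(x_t)\|\\
&\le\sqrt{d_\out}\,\|\tilde a-\tilde b\|_{\mathcal E,\infty}+KB_\infty\,\|\tilde g-\tilde h\|_{\mathcal G,\infty}.
\end{align*}
A product of sup-norm parameter covers of $\mathcal E$ and $\mathcal G$ is then immediately a $\rho_n$-cover, hence an $L^2(P_n)$-cover. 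This is essentially your fallback.

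\textbf{What each route buys.} Your route is modular: it reuses the Proposition and ties the theorem to the Gated Hilbert Space framework. Its cost is that your claim that the $\sqrt K$ is absorbed is only half right. Your chain reads
\[
\|f-r\|_{L^2(P_n)}\le\sqrt K\,d_\Delta(\tilde f,\tilde r)\le\sqrt{2K}\,G_\ast\sqrt{d_\out}\,\|\tilde a-\tilde b\|_{\mathcal E,\infty}+\sqrt2\,KB_\infty\|\tilde g-\tilde h\|_{\mathcal G,\infty}.
\]
\begin{itemize}
\item On the gating side, the $\sqrt K$ combines with the Proposition's $\sqrt K B_\infty$ into exactly the $KB_\infty$ of the statement.
\item On the expert side, under your bound $G_\ast\le 1$ (attained as soon as $\mathcal G$ contains a hard gate), you get $d_\pre K\log(L\sqrt{Kd_\out}\,n/D)/n$. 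This exceeds the stated term by order $d_\pre K\log K/n$.
\item That excess is not hidden by $\lesssim$, whose constants depend only on $\sigma,B_\infty,d_\out$. Nor is it absorbed by $D/n$ or by the gating term, which carries $d_\prg$ rather than $d_\pre K$. It is absorbed only when, e.g., $L\sqrt{d_\out}\,n/D\ge K$.
\end{itemize}

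\textbf{Why the loss is intrinsic.} $d_\Delta$ measures the tuple $(g_ia_i)_i$ rather than the sum $\sum_i g_ia_i$. Take $g_i=h_i\equiv 1/K$ and $a_i-b_i\equiv v$. Then $\|f-r\|_{L^2(P_n)}=\|v\|$ but $d_\Delta(\tilde f,\tilde r)=\|v\|/\sqrt K$. So no comparison $\|f-r\|_{L^2(P_n)}\le C\,d_\Delta(\tilde f,\tilde r)$ holds with $C$ independent of $K$.

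\textbf{Conclusion.} Your primary route proves the $\tilde{\mathcal O}(D/n)$ rate, and the theorem up to an extra $\log K$ on the expert term. To get \eqref{eqn:complexity_general} exactly as stated, use the direct pointwise bound above. It exploits the convex-combination structure that $d_\Delta$ discards.
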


\noindent 
\textbf{Remark.}
    The bound decomposes the squared estimation error into an effective parametric term and logarithmic complexity terms. The first term, $\frac{d_\pre K + d_\prg}{n},$ captures the leading-order contribution associated with the effective number of parameters across the $K$ experts and the gate. The second and third terms account for the additional complexity incurred by controlling the gating and expert function classes, respectively, and enter only through logarithmic factors in the sample size $n$ and the envelope and Lipschitz constants $G_\ast$ and $B_\ast$. Thus, up to logarithmic factors, the squared estimation error scales as $\phi(\hd,n)
\lesssim \frac{d_\pre K+ d_\prg }{n}.$ Therefore upto logarithmic factors, the bound exhibits the linear dependence on the number of experts $K$, reflecting the complexity of allowing all $K$ experts to participate in the general mixture-of-experts architecture.

\subsection{Top-k MoE Class}
\label{subsection:top-k_moe_class}

Top-$k$ routing raises two issues absent in the dense case. First, the selection map $x\mapsto\sigma_{\tilde g}(x)$ is discontinuous in the gating parameters, so nearby gates can activate different experts near ties. Second, different inputs activate different $k$-subsets, so all $Kd_\pre$ expert parameters must be controlled. Following (Zhao, 2024), we condition on the sample, partition the gating class by the selection pattern it induces, bound the number of patterns via the Natarajan dimension $d_N$, and cover each piece, on which the model is Lipschitz. The resulting rate (Theorem~\ref{thm:complexity_top_k}, with proof in Appendix~\ref{proof:complexity_top_k}) matches the dense rate up to a routing cost $d_N(\log n + k\log(eK/k))/n$, which is absorbed into the gating term for linear softmax gates.
\begin{assumption}\label{assump:topk}
\emph{(Bounded experts and routing complexity)}
\textbf{(i)} The experts are uniformly bounded:
    $B_\infty := \sup_{\tilde a\in\mathcal E}\max_{1\le i\le K}\sup_{x\in\xx}\|a_i(x)\| < \infty$.\\
\noindent 
\textbf{(ii)} a fixed measurable rule, and the induced selector class breaks ties in the top-$k$ selections
    \begin{equation}
        \scalebox{0.8}{$
        \begin{aligned}
            \mathcal S_k := \Big\{\, x\mapsto \sigma_{\tilde g}(x) := \operatorname*{arg\,top\text{-}k}\big(g_1(x),\ldots,g_K(x)\big) \;:\; \tilde g\in\mathcal G \Big\}\nonumber
        \end{aligned}$}
    \end{equation}
of maps $\xx\to\binom{[K]}{k}$ has finite Natarajan dimension $d_N := \operatorname{Ndim}(\mathcal S_k) < \infty $ \citep{zhou2022mixture}.
\end{assumption} 

\begin{definition}[Soft-then-hard Top-$k$ routing]
\label{def:class1}
Let $K,k\in\mathbb{N}$ with $1\le k\le K$. Define
{
\begin{align*}
\mathcal{F}_1 \coloneqq \Bigl\{&\tilde{f} = 
(\hat{g}_1,\ldots,\hat{g}_K,a_1,\ldots,a_K) \;\Big|\;\\
&\; g_i:\mathcal{X}\to[0,1] \text{ measurable},\; \textstyle\sum_{i=1}^K g_i(x) = 1,\\
&\; \sigma_g(x) \coloneqq \operatorname{arg\,top\text{-}k}
   \big(g_1(x),\ldots,g_K(x)\big),\\
&\; \hat{g}_i(x) \coloneqq \tfrac{g_i(x)\,
   \mathbb{1}[i \in \sigma_g(x)]}
   {\sum_{j\in\sigma_g(x)} g_j(x)}
\Bigr\}
\end{align*}
}
\end{definition}
 
\begin{theorem}\label{thm:complexity_top_k}
Let $\hdk$ be the parametric soft-then-hard top-$k$ class $\mathcal F_1$ of Definition~\ref{def:class1}, whose gating and expert classes are parametric $L$-Lipschitz, and suppose Assumptions~\ref{assump:topk}\&~\ref{assump:data} holds. Then, for every $1\le k\le K$ [with $D = d_\pre K+d_\prg$],
\begin{equation}\label{eqn:complexity_top-k}
    \scalebox{0.88}{$
    \begin{aligned}
    &\phi(\hdk, n) \lesssim\;\frac{D}{n}+\frac{d_N\big(\log n+2k\log(eK/k)\big)}{n}\\
    &+\frac{d_\pre K\log\!\big(\frac{L\sqrt{d_\out}\,n}{D}\big)+d_\prg\log\!\big(\frac{LKB_\infty\,n}{D}\big)}{n}
    .
\end{aligned}$}
\end{equation}

\end{theorem}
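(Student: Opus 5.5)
The plan is to reduce the statement to the offset bound \eqref{eqn:offset_rad} with $\alpha=\gamma$. By Lemma~\ref{lem:oracle} and \eqref{eqn:offset_rad}, which apply verbatim to $\hdk$, it suffices to bound the empirical covering number $\mathcal N(\epsilon,\hdk,L^2(P_n))$. The new ingredient compared with Theorem~\ref{thm:complexity_general} is the discontinuity of $\tilde g\mapsto\sigma_{\tilde g}$. I would handle it by conditioning on the design $x_1,\ldots,x_n$ and partitioning $\mathcal G$ according to the selection pattern $(\sigma_{\tilde g}(x_1),\ldots,\sigma_{\tilde g}(x_n))\in\binom{[K]}{k}^n$.

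\textbf{Step 1: counting patterns.} By the Natarajan lemma (Sauer--Shelah analogue for multiclass classes) applied to $\mathcal S_k$ with label set of size $M=\binom{K}{k}\le (eK/k)^k$, the number of distinct patterns on the sample is at most $n^{d_N}M^{2d_N}$. Its logarithm is therefore at most $d_N(\log n+2k\log(eK/k))$, which is exactly the routing term in \eqref{eqn:complexity_top-k}.

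\textbf{Step 2: covering each cell.} Fix a pattern $s=(s_1,\ldots,s_n)$ and let $\mathcal G_s$ be the gates inducing it. For $\tilde g\in\mathcal G_s$ the empirical predictor is
\[
f(x_t)=\sum_{i\in s_t}\frac{g_i(x_t)}{\sum_{j\in s_t}g_j(x_t)}\,a_i(x_t),
\]
with the active sets frozen, so on this cell the model is a smooth function of $(\thg,\thex)$. I would run the argument of Proposition~\ref{prop:general_entropy_estimate} with respect to $P_n$. First, insert the mixed term to split the distance into an expert part, weighted by the renormalized gates (which sum to one, giving a factor $\le\sqrt{d_\out}$), and a routing part, weighted by $B_\infty$. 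Then cover $\Theta_\mathcal E^K$ and $\Theta_\mathcal G$ by Lipschitzness. Gates restricted to a cell may not form a parameter box, but a cover of $\Theta_\mathcal G$ intersected with the cell is still bounded by $(C/\epsilon)^{d_\prg}$ through the usual halving argument (an external cover at scale $\epsilon/2$). Summing over cells, $\log\mathcal N(\epsilon,\hdk,L^2(P_n))$ is at most the routing term plus $d_\pre K\log(cL\sqrt{d_\out}/\epsilon)$ plus $d_\prg\log(cLKB_\infty/\epsilon)$.

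\textbf{Step 3: optimizing the scale.} Use $\mathcal N_n(\gamma)\le\mathcal N(\gamma/2,\hdk)^2$, take $\alpha=\gamma$ in \eqref{eqn:offset_rad}, and choose $\gamma\asymp D/n$. The term $4\alpha$ gives $D/n$, and the entropy over $cn$ gives the stated logarithmic terms. Any constants depending on $\sigma,B_\infty,d_\out$ are absorbed into $\lesssim$.

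\textbf{Main obstacle.} The hard step is the Lipschitz control of the renormalization inside a cell. The derivative of $g_i/\sum_{j\in s_t}g_j$ blows up if the selected mass is small, so the renormalized gates need not be Lipschitz in $\thg$ with constant $L$. I would first try the lower bound $\sum_{j\in s_t}g_j\ge k/K$, which holds because the top-$k$ gates carry at least a $k/K$ fraction of the total mass $1$. This makes the renormalization Lipschitz with constant $O(K/k)$, which enters only inside the logarithm and can be absorbed into the $\log(LKB_\infty n/D)$ term up to constants.
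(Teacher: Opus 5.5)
Your proposal is correct and follows essentially the same route as the paper's proof. Both condition on the design and partition $\mathcal G$ by selection patterns, counted via Natarajan's lemma as $|\Pi|\le n^{d_N}\binom{K}{k}^{2d_N}$. Both use the selected-mass lower bound $k/K$ to get, within each cell, a pointwise bound of the form $\sqrt{d_\out}\,\|\tilde a-\tilde b\|_{\mathcal E,\infty}+2KB_\infty\|\tilde g-\tilde h\|_{\mathcal G,\infty}$. Both cover each cell internally through an external cover at half the radius, and finish with $\alpha=\gamma\asymp D/n$ in \eqref{eqn:offset_rad}. Your explicit use of $\mathcal N_n(\gamma)\le\mathcal N(\gamma/2,\hdk,L^2(P_n))^2$ is slightly more careful than the paper's Step 5, which plugs in $\mathcal N(\gamma,\hdk,L^2(P_n))$ directly, but this only changes constants.
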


\citep{zhao2024generalization} bound the generalization gap of top-$k$ MoEs for bounded Lipschitz losses, at rate $O(1/\sqrt{n})$, through the Rademacher complexity of a single expert class and the Natarajan dimension $d_N$ of the router. We use the same pattern-counting device for the routing cost, but our setting and conclusions differ in four ways. (i) We study the estimation error under squared loss and obtain fast $\tilde{\mathcal O}(1/n)$ rates via localization. (ii) The expert contribution scales as $Kd_\pre$, and a matching lower bound shows this is unavoidable, because input-dependent routing activates every expert somewhere in the input space. (iii) For linear gating, the routing cost reduces to $d_\prg\log(nK^2)$, without the $k\log(K/k)$ factor. (iv) We also showed that a finite routing complexity is necessary: Lipschitz continuity in the parameters alone does not suffice.\\
 \noindent
 \textbf{Remark (The Routing Term).}
The first line of \eqref{eqn:complexity_top-k} matches the dense rate of Theorem~\ref{thm:complexity_general}: all $K$ experts enter, because different inputs activate different $k$-subsets, so every expert is active somewhere in $\xx$. Sparse routing does not reduce the worst-case parameter count; it adds the routing cost in the second term because the top-$k$ map is discontinuous in the gating parameters. The term $\log\binom{K}{k}\le k\log(eK/k)$ enters additively through the number of selection patterns. \\
 \noindent
 \textbf{Remark (Assumption~\ref{assump:topk}(ii) cannot be dropped). }
Lipschitz continuity in the parameters alone does not suffice for top-$k$ routing. Let $\xx=[0,1]$, $\mu=\mathrm{Unif}[0,1]$, $K=2$, $k=1$, $\Theta_G=(0,1]$, $g_1(x;\theta)=\tfrac12+\tfrac14\theta^2\sin(x/\theta)$, $g_2=1-g_1$, and constant experts $a(x;\vartheta)=\vartheta\in[-1,1]$. Since $|\partial_\theta g_1|\le\tfrac34$, the gate is Lipschitz, yet the top-$1$ output with $a_1\equiv1$, $a_2\equiv-1$ is $\operatorname{sgn}\sin(x/\theta)$. For $\theta_j=1/(2^j\pi)$ this is the $j$-th Rademacher function $r_j$, and $\|r_j-r_l\|_{L^2(\mu)}^2=2$ for $j\neq l$. Under Gaussian noise, every pairwise KL divergence is at most $n/\sigma^2$ while the number of hypotheses is unbounded, so Fano's inequality \citep{tsybakov2008lower} gives a minimax risk of at least $\tfrac12$ for every $n$. With the same gate, the dense output $\tfrac12\theta^2\sin(x/\theta)$ is Lipschitz in $\theta$, as it is covered by Theorem~\ref{thm:complexity_general}.\\
\noindent 
\textbf{Remark (Linear softmax gating).}
For logits $s_i(x)=w_i^\top x+c_i$ ($d_\prg=K(d_{\mathrm{in}}+1)$), $\sigma_{\tilde g}(x)$ depends only on the signs of $s_i(x)-s_j(x)$, $i<j$, which are linear in $\theta_g$. By (Lemma 17; \citep{bartlett2019nearly}), for $n\ge d_\prg$ the number of selection patterns on the sample is at most $2\big(enK(K-1)/d_\prg\big)^{d_\prg}$, so the routing term in \eqref{eqn:complexity_top-k} can be replaced by $d_\prg\log(enK^2/d_\prg)/n$, which is of the same order as the gating term. Hence $\phi(\hdk,n)=\tilde{\mathcal O}\big((d_\pre K+d_\prg)/n\big)$.

\subsection{Top-p MoE Class}
\label{subsection:top-p_moe_class}

Unlike top-$k$, top-$p$ routing activates an input-dependent number of experts. Writing $g_{(K)}(x)\ge\cdots\ge g_{(1)}(x)$ for the sorted gating weights, with ties broken by a fixed rule, the active set at $x$ is
\begin{equation}
    \scalebox{0.8}{$
    \begin{aligned}
        S_{\tilde g}(x) := \{(1),\ldots,(k_x)\},\quad k_x := \min\Big\{j : \textstyle\sum_{l=1}^{j} g_{(K-l+1)}(x)\ge p\Big\}.
    \end{aligned}$}
\end{equation}

Since the $j$ largest of $K$ weights carry mass at least $j/K$, every active set has size at most $\kappa:=\lceil Kp\rceil$. The analysis mirrors the top-$k$ case, with three modifications: the selector ranges over all subsets of size at most $\kappa$, of which there are at most $(eK/\kappa)^\kappa$; the selected mass is bounded below by $p$ rather than $k/K$; and routing complexity is measured by the Natarajan dimension of the top-$p$ selector class.
\begin{assumption}\label{assump:topp}
    \emph{(Top-$p$ routing complexity)}
The selector class $\mathcal S_p:=\{x\mapsto S_{\tilde g}(x):\tilde g\in\mathcal G\}$ has finite Natarajan dimension $d_{N,p}<\infty$.
\end{assumption}

\begin{theorem}\label{thm:complexity_top-p}
Let $\hdp$ be the top-$p$ MoE class with parametric $L$-Lipschitz gating and expert classes, and suppose Assumptions~\ref{assump:topk}(i) and~\ref{assump:topp} hold. Then, under the model assumption in ~\ref{assump:data}, for every $p\in(0,1)$ [$D=d_\pre K+d_\prg, \kappa = \lceil Kp\rceil$],
\begin{equation}\label{eqn:complexity_top-p}
    \scalebox{0.88}{$
    \begin{aligned}
    &\phi(\hdp, n) \lesssim \frac{D}{n} + \frac{d_{N,p}\big(\log n+2\kappa\log(eK/\kappa)\big)}{n}\\
    & \frac{d_\pre K \log\left(\frac{L\sqrt{d_\out}n}{D}\right)}{n} + 
    \frac{d_\prg\log\left( \frac{L\kappa B_\infty n}{Dp}\right)}{n}.
    \end{aligned}$}
\end{equation}
\end{theorem}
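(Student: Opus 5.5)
We follow the top-$k$ argument of Theorem~\ref{thm:complexity_top_k} and change only what top-$p$ routing forces. The starting point is Lemma~\ref{lem:oracle} together with the single-scale version ($\alpha=\gamma$) of \eqref{eqn:offset_rad}. These reduce the claim to the bound
\[
\log\mathcal N(\gamma,\hdp,L^2(P_n)) \lesssim d_{N,p}\big(\log n+2\kappa\log(eK/\kappa)\big)+d_\pre K\log\big(L\sqrt{d_\out}/\gamma\big)+d_\prg\log\big(L\kappa B_\infty/(p\gamma)\big),
\]
followed by the choice $\gamma^2\asymp D/n$. This choice contributes the $D/n$ term and turns each $\log(1/\gamma)$ into $\log(n/D)$. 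The empirical cover of $\hdp-\hdp$ is then controlled by the square of the cover at scale $\gamma/2$, exactly as before.

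\textbf{Step 1: partition by selection patterns.} We condition on $x_1,\dots,x_n$. Each $\tilde g\in\mathcal G$ induces a pattern $(S_{\tilde g}(x_1),\dots,S_{\tilde g}(x_n))$, where each entry is a subset of $[K]$ of size at most $\kappa$. The number of such labels is $M\le\sum_{j\le\kappa}\binom{K}{j}\le(eK/\kappa)^\kappa$. Natarajan's lemma, as used in the top-$k$ proof, bounds the number of distinct patterns by $n^{d_{N,p}}M^{2d_{N,p}}$. Taking logarithms gives the routing term $d_{N,p}(\log n+2\kappa\log(eK/\kappa))$.

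\textbf{Step 2: cover each cell.} Fix a pattern, i.e.\ a set $S_t$ at each sample point. On that cell the predictor at $x_t$ is $\sum_{i\in S_t}g_i(x_t)a_i(x_t)/m_t$, where $m_t=\sum_{j\in S_t}g_j(x_t)$. This expression is a smooth function of the gate values because the definition of $k_x$ forces $m_t\ge p$. This is where top-$p$ departs from top-$k$, whose lower bound was $k/K$.

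The perturbation bound goes as follows. For gates $g,h$ and normalisations $m,m'$,
\[
\Big\|\frac{g_i}{m}-\frac{h_i}{m'}\Big\|\le\frac{|g_i-h_i|}{p}+\frac{|m-m'|}{p^2}g_i .
\]
Summing over the at most $\kappa$ indices in $S_t$ and using $|m-m'|\le\sum_{j\in S_t}|g_j-h_j|$, the gate part of the discrepancy is at most $(2\sqrt\kappa/p)\,B_\infty\|\tilde g-\tilde h\|$ pointwise, after Cauchy--Schwarz. The expert part is at most $\sum_{i\in S_t}(g_i/m_t)\|a_i-b_i\|$, which is a convex combination, so no factor of $\kappa$ appears there.

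Covering the gate and expert parameters separately with the $L$-Lipschitz parametrization, as in Proposition~\ref{prop:general_entropy_estimate}, gives a cell entropy of $d_\pre K\log(L\sqrt{d_\out}/\gamma)+d_\prg\log(L\kappa B_\infty/(p\gamma))$. Since $\sqrt\kappa\le\kappa$, using $\kappa$ in the constant is harmless. A union over cells then adds the Step~1 term. Plugging the result into \eqref{eqn:offset_rad} completes the proof.

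\textbf{Main obstacle.} The delicate point is that a cover built for a fixed pattern need not respect that pattern: a nearby gate $\tilde h$ may induce a different active set at some $x_t$. We therefore restrict every cover to a single cell, taking centers of the form (gate in that cell) projected via the Lipschitz map, and evaluate all candidate functions with the cell's fixed sets $S_t$. On each cell the map from parameters to the sample vector is Lipschitz. The usual fact that an external cover can be replaced by an internal one at scale $\gamma/2$ then handles membership in the cell.

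A second, minor check concerns ties and the value $p$ itself. These are absorbed by the fixed tie-breaking rule built into $\mathcal S_p$, which is what allows Assumption~\ref{assump:topp} to be applied directly.
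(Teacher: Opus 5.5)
Your overall architecture is the paper's: condition on the design, partition $\mathcal G$ by the top-$p$ selection pattern, count patterns with Natarajan's lemma using at most $(eK/\kappa)^\kappa$ labels, use the selected mass $\ge p$ to make the model Lipschitz on each cell, cover each cell internally at half radius, take a union, and plug the entropy into the single-scale ($\alpha=\gamma$) form of \eqref{eqn:offset_rad}.

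\textbf{Gap: the choice of scale.} The step that fails as written is the final scale choice. With $\alpha=\gamma$, \eqref{eqn:offset_rad} reads $2\sigma\{\log\mathcal N_n(\gamma)/(cn)+4\gamma\}$. The discretization residual is \emph{linear} in $\gamma$, not quadratic: it comes from $\frac1n\sum_t\langle z_t,h(x_t)-h_\gamma(x_t)\rangle\lesssim\|h-h_\gamma\|_n\le\gamma$. So your choice $\gamma^2\asymp D/n$ leaves a term $4\gamma\asymp\sqrt{D/n}$. That term dominates everything else and gives only the slow rate; it does not produce the $D/n$ term you claim.

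The correct balance of $D\log(1/\gamma)/(cn)$ against $4\gamma$ is $\gamma\asymp D/n$; the paper takes $\gamma^\ast=D/(4cn)$. This gives $4\gamma^\ast\asymp D/n$ and $\log(1/\gamma^\ast)\asymp\log(n/D)$, which is exactly \eqref{eqn:complexity_top-p}. Alternatively, you could keep $\gamma\asymp\sqrt{D/n}$ if you also keep the entropy integral and set $\alpha\asymp D/n$. That also yields order $D\log(n/D)/n$, but it is not the single-scale argument you describe.

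\textbf{Minor slip in the perturbation bound.} This one affects only the constant inside the gating logarithm. Your inequality $|g_i/m-h_i/m'|\le|g_i-h_i|/p+g_i|m-m'|/p^2$, summed over $S_t$, gives a factor $1/p+1/p^2$, not the $2/p$ you then state. To get $2/p$, write $g_i/m-h_i/m'=(g_i-h_i)/m+(h_i/m')(m'-m)/m$ and use $\sum_{i\in S_t}h_i/m'=1$. This yields $\sum_{i\in S_t}|g_i/m-h_i/m'|\le\frac{2}{p}\sum_{i\in S_t}|g_i-h_i|$, which is the paper's Step~2. Your Cauchy--Schwarz refinement to $\sqrt\kappa$, measuring gate discrepancies in the Euclidean norm on gate vectors, is fine and only improves the constant.
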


\noindent
\textbf{Remark (Interpretation).}
As in the top-$k$ case, all $K$ experts enter the bound, because different inputs activate different subsets. The threshold $p$ affects the rate only through the routing term, via $\kappa=\lceil Kp\rceil$, and through the factor $\kappa/p\le K+1/p$ inside the gating logarithm. Taking $p=k/K$ gives $\kappa=k$ and $\kappa/p=K$, so \eqref{eqn:complexity_top-p} has the same form as \eqref{eqn:complexity_top-k}; the only differences are the selector class and the variable size of the active set. Assumption~\ref{assump:topp} cannot be dropped either: for $K=2$ and $p\le\tfrac12$, top-$p$ routing reduces to top-$1$, and the counterexample following Theorem~\ref{thm:complexity_top_k} applies verbatim.

\subsection{Minimax Optimality of Squared Estimation Error Rates for MoE Class}
\label{subsection:minimax_moe_class}

We show that the factor $Kd_\pre$ in Theorems~\ref{thm:complexity_general}\&\ref{thm:complexity_top-p} cannot be improved, even for sparse routing. The hard instance routes disjoint groups of design points to different experts, so that estimation decouples into $K$ separate linear regressions.
 
\begin{assumption}[Hard instance]\label{assump:lower}
Let $d:=d_{\mathrm{in}}d_\out$ with $Kd\ge8$, and let $n=Km$ with $m\ge d_{\mathrm{in}}$.
\begin{enumerate}
    \item[(i)] \emph{Design.} The design points are split into $K$ groups $\{x_{j,s}\}_{s=1}^m$, $j\in[K]$, of distinct points with $\|x_{j,s}\|\le1$ and $\frac{\lambda}{d_{\mathrm{in}}}I\preceq\Sigma_j:=\frac1m\sum_{s=1}^m x_{j,s}x_{j,s}^\top\preceq\frac{1}{d_{\mathrm{in}}}I$ for some $\lambda\in(0,1]$.
    \item[(ii)] \emph{Routing.} $\mathcal G$ contains a gate $g^0$ with $g^0_j(x_{j,s})=1$ for all $j,s$.
    \item[(iii)] \emph{Experts.} $\mathcal E$ contains the linear maps $x\mapsto Wx$ with $W\in\mathbb R^{d_\out\times d_{\mathrm{in}}}$ and $\|W\|_F\le R$, where $R\le B_\infty$.
\end{enumerate}
\end{assumption}
For example, $x_{j,s}=c_je_{l(s)}$ with distinct $c_j\in[\tfrac12,1]$ and $l(s)$ cycling through $[d_{\mathrm{in}}]$ satisfies (i) with $\lambda=\tfrac14$ whenever $d_{\mathrm{in}}\mid m$, and the gate $g^0_j(x)=\mathbb 1\{\|x\|\in I_j\}$ satisfies (ii) for any partition $\{I_j\}_{j=1}^K$ of $[0,\infty)$ into intervals with $c_j\in I_j$.
 
\begin{theorem}\label{thm:lower_bound}
Under Assumption~\ref{assump:data} with Gaussian noise and Assumption~\ref{assump:lower}, let $\mathcal H$ be any of $\hd$, $\hdk$ $(1\le k\le K)$ or $\hdp$ $(p\in(0,1))$. If $n\ge\sigma^2Kd_{\mathrm{in}}d/R^2$, then
\begin{align}\label{eqn:lower_bound}
\inf_{\hat f}\sup_{f\in\mathcal H}\mathbb E\big[\|\hat f-f\|_n^2\big]\ge c\,\lambda\,\frac{\sigma^2Kd_{\mathrm{in}}d_\out}{n},
\end{align}
where $c>0$ is an absolute constant and the infimum is over all estimators.
\end{theorem}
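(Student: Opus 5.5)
The plan is to reduce to a finite packing of the subclass in which the gate is fixed at $g^0$ and the $K$ experts are linear, and then apply Fano's inequality (or Assouad's lemma). First, every model with gate $g^0$ and linear experts $W_1,\ldots,W_K$ lies in each of the classes $\hd$, $\hdk$ and $\hdp$. Indeed, $g^0_j(x_{j,s})=1$, so at each design point the top-$k$ and top-$p$ selections contain expert $j$ and put all renormalized weight on it (ties among the zero weights only add zero-weight experts). Hence, on the sample, every such model evaluates to $f_W(x_{j,s})=W_jx_{j,s}$. Since $\|\cdot\|_n$ only sees the design points, the risk on the sample is
\[
\|f_W-f_{W'}\|_n^2=\frac1K\sum_{j=1}^K \operatorname{tr}\big((W_j-W_j')\Sigma_j(W_j-W_j')^\top\big),
\]
which lies between $\frac{\lambda}{Kd_{\mathrm{in}}}\sum_j\|W_j-W_j'\|_F^2$ and $\frac{1}{Kd_{\mathrm{in}}}\sum_j\|W_j-W_j'\|_F^2$. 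So the problem decouples into $K$ independent linear regressions with a total of $Kd$ parameters.

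Next I construct the packing. By Varshamov--Gilbert (using $Kd\ge 8$) there exist $\omega^{(1)},\ldots,\omega^{(M)}\in\{0,1\}^{Kd}$ with pairwise Hamming distance at least $Kd/8$ and $\log M\ge Kd/8$. Reshape each $\omega$ into $(W_1,\ldots,W_K)$ and set $W_j=\delta\,\Omega_j$ for a scale $\delta>0$. The norm constraint requires $\|W_j\|_F\le\delta\sqrt d\le R$. The two-sided bound above then gives:
\begin{itemize}
\item separation $\|f_{W}-f_{W'}\|_n^2\ge \lambda\delta^2/(8d_{\mathrm{in}})$;
\item Gaussian KL divergence $\frac{n}{2\sigma^2}\|f_W-f_{W'}\|_n^2\le \frac{n\delta^2 d}{2\sigma^2 d_{\mathrm{in}}}$, using $\|W_j-W'_j\|_F^2\le\delta^2 d$ so that the sum over $j$ is at most $K\delta^2 d$.
\end{itemize}

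Now choose $\delta^2= c_0\sigma^2Kd_{\mathrm{in}}/n$. With this choice the KL divergence is at most $c_0Kd/2\le\frac{1}{4}\log M$ for small $c_0$, and Fano's inequality yields a risk of at least a constant times $\lambda\delta^2/d_{\mathrm{in}}\asymp\lambda\sigma^2K/n$. This falls short of the target by a factor of $d$, which is the main obstacle. The bottleneck is the KL bound: I used the crude bound $\sum_j\|\Delta_j\|_F^2\le K d\,\delta^2$ while the separation only uses $Kd/8$ coordinates. To get the right order I will instead use Assouad's lemma on the hypercube $\{0,\pm1\}^{Kd}$ (or equivalently a ball-packing scaled per coordinate). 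Neighbors differing in one coordinate then have KL divergence about $n\delta^2/(\sigma^2 K d_{\mathrm{in}})$, and the loss accumulates over all $Kd$ coordinates, each contributing $\lambda\delta^2/(Kd_{\mathrm{in}})$. Taking $\delta^2\asymp\sigma^2Kd_{\mathrm{in}}/n$ makes each neighbor KL $O(1)$ and gives total risk $\gtrsim Kd\cdot\lambda\delta^2/(Kd_{\mathrm{in}})=\lambda\sigma^2Kd/n$. Since $d=d_{\mathrm{in}}d_\out$, this is $\lambda\sigma^2Kd_{\mathrm{in}}d_\out/n$ up to the stated constant.

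Finally, I check the constraint in this Assouad construction. The scale must satisfy $\|W_j\|_F\le\delta\sqrt d\le R$, i.e. $\sigma^2Kd_{\mathrm{in}}d/n\lesssim R^2$, which is exactly the hypothesis $n\ge\sigma^2Kd_{\mathrm{in}}d/R^2$ (absorbing constants into $c$). Boundedness $\le B_\infty$ then follows from $\|x\|\le1$ and $R\le B_\infty$. In the write-up I will carry out the Assouad step carefully, including the handling of $\lambda$ through the two-sided bounds on $\Sigma_j$, and will present the Fano attempt only as motivation, if at all.
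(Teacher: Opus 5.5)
Your reduction to the subclass $\{f_W\}$ with gate $g^0$ is the same as the paper's Step 1, and your final Assouad argument is correct. However, the reason you give for abandoning Fano is an arithmetic slip. With pairwise Hamming distance at least $Kd/8$ and $W_j=\delta\Omega_j$, you have $\sum_j\|W_j-W_j'\|_F^2\ge\delta^2Kd/8$, hence
\[
\|f_W-f_{W'}\|_n^2\ \ge\ \frac{\lambda}{Kd_{\mathrm{in}}}\cdot\frac{\delta^2Kd}{8}\ =\ \frac{\lambda\delta^2d_{\mathrm{out}}}{8},
\]
not $\lambda\delta^2/(8d_{\mathrm{in}})$; you dropped the factor $d$. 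The crude bound $\sum_j\|\Delta_j\|_F^2\le Kd\delta^2$ in your KL estimate differs from the $Kd\delta^2/8$ in the separation only by the constant $8$, so the KL bound is not a bottleneck. With $\delta^2=c_0\sigma^2Kd_{\mathrm{in}}/n$, Fano directly gives risk of order $\lambda\delta^2d_{\mathrm{out}}\asymp\lambda\sigma^2Kd_{\mathrm{in}}d_{\mathrm{out}}/n$. This is precisely the paper's proof. It uses entries $\pm a$ instead of $\{0,\delta\}$, takes $a^2=\frac{\log 2}{256}\sigma^2Kd_{\mathrm{in}}/n$, applies Tsybakov's Theorem~2.5 with $\alpha=1/16$, and uses Markov's inequality to pass from probability to expectation.

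The Assouad route you settle on is a genuinely different and also valid finish. Use the cube $\{-1,1\}^{Kd}$ (not $\{0,\pm1\}^{Kd}$) with $W_j=\delta\tau_j$.
The lower bound $\Sigma_j\succeq\frac{\lambda}{d_{\mathrm{in}}}I$ gives $\|f_\tau-f_{\tau'}\|_n^2\ge\frac{4\lambda\delta^2}{Kd_{\mathrm{in}}}\rho_H(\tau,\tau')$.
The upper bound $\Sigma_j\preceq\frac{1}{d_{\mathrm{in}}}I$ gives neighbor KL at most $\frac{2n\delta^2}{\sigma^2Kd_{\mathrm{in}}}$.

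Since $\Sigma_j$ need not be diagonal, the loss is not coordinatewise separable, so you should state the pseudo-metric form of Assouad. Project $\hat f$ onto the nearest $f_\tau$, losing a factor $4$ by the triangle inequality. Then apply the Hamming-risk bound $\max_\tau\mathbb E_\tau\rho_H(\hat\tau,\tau)\ge\frac{Kd}{2}\bigl(1-\sqrt{\mathrm{KL}_{\max}/2}\bigr)$, which yields a bound of order $\lambda\delta^2d_{\mathrm{out}}$.

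Compared with the paper, Assouad needs only KL bounds between neighbors and dispenses with Varshamov--Gilbert, and hence with the condition $Kd\ge8$. It also gives markedly better constants than the paper's $\log 2/8192$. Fano, in turn, needs only pairwise separation of a packing and a KL bound to one center. Both use the two-sided spectral bounds on $\Sigma_j$ in the same way. In both, the hypothesis $n\ge\sigma^2Kd_{\mathrm{in}}d/R^2$ is exactly what ensures $\delta^2d\le R^2$.
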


For linear experts, $d_\pre=d_{\mathrm{in}}d_\out$, so  Eq.\eqref{eqn:lower_bound} matches the leading term $Kd_\pre/n$ of Theorems~\ref{thm:complexity_general}, ~\ref{thm:complexity_top_k} \&~\ref{thm:complexity_top-p} up to logarithmic factors, whenever the gating and routing terms are of no larger order, i.e.\ $d_\prg+d_N k\log(eK/k)\lesssim Kd_\pre$ (for instance, $d_\prg=K(d_{\mathrm{in}}+1)$ for linear gating). Because the hard instances are well specified, Lemma~\ref{lem:oracle} turns the upper bounds on $\phi$ into risk bounds $\mathbb E\|\hat f-f\|_n^2\le\frac43\phi$ for these instances, which Eq.\eqref{eqn:lower_bound} shows are tight up to logarithmic factors. In particular, sparse routing does not reduce the worst-case sample complexity: every expert is active on its own group of inputs, and all $K$ of them must be estimated. The proof is in Appendix~\ref{proof:lower_bound}.

\section{EXPERIMENTS} \label{sec:experiments}
\begin{figure}
    \centering
    \includegraphics[width=0.9\linewidth]{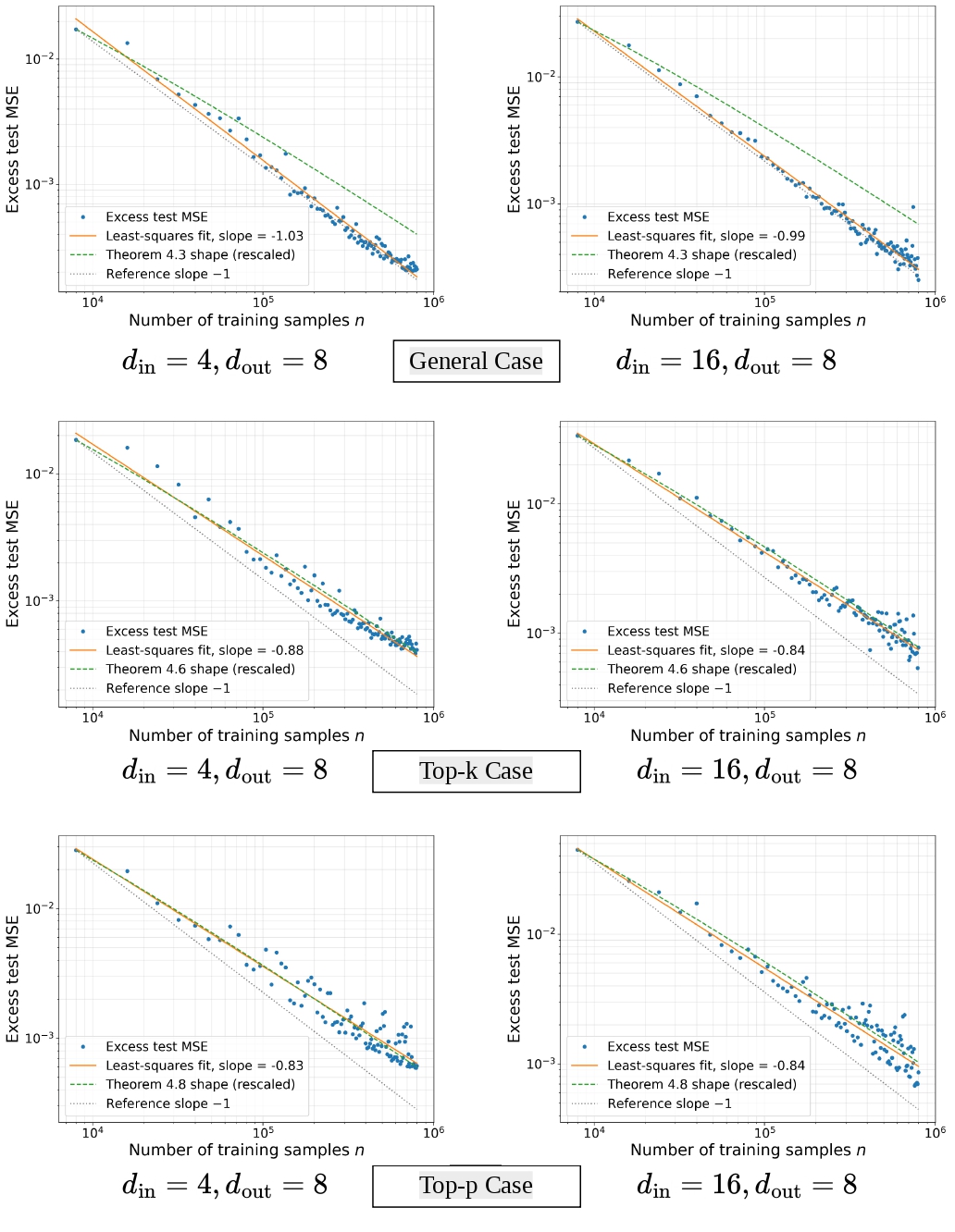}
    \caption{Excess test MSE versus $n$ (log--log) for the General, Top$-k$ and Top-$p$ cases for $2$ $(d_\iin, d_\out)$ combinations.}
    \label{fig:main_loglog}
\end{figure}

We empirically investigate the approximation and generalization properties of a feed-forward MoE model for learning non-linear mappings induced by a fixed Transformer Encoder, examining whether observed test RMSE behaviors are consistent with the theoretical bounds in Eq.~\eqref{eqn:complexity_general}, \eqref{eqn:complexity_top-k}, and \eqref{eqn:complexity_top-p}. We construct synthetic regression problems with covariates drawn from a standard multivariate Gaussian and targets generated by a fixed Transformer Encoder plus small Gaussian noise, varying input dimensions $d_\iin\in\{4,8,16,32\}$ and output dimensions $d_\out\in\{8,16,32\}$, yielding $12$ input-output configurations per setting. The detailed codebase is available at \href{https://anonymous.4open.science/r/Mixture_of_Experts_Sparsity_Compute-8024/README.md}{This Annonymous Repository}.\\
\noindent 
\textbf{Synthetic Data Generation}
For each configuration, we generate $N=10^6$ samples with $X_i\sim\mathcal{N}(0,I_{d_\pr})$ and regression targets $Y_i = \text{Transformer}(X_i)+\varepsilon_i$, where $\varepsilon_i\sim\mathcal{N}(0,\sigma^2 I_{d_\out})$ with $\sigma=5\times10^{-4}$. The Transformer encoder is frozen ($8$ heads, hidden dim $256$, $8$ layers), defining a fixed non-linear target function. Data are standardized componentwise and split 4:1 into train and test sets.\\
\noindent 
\textbf{MoE Architecture}
We use a feed-forward MoE with $K=4$ experts, each a $1$hidden-layer network (hidden dim $32$), and a $1$-hidden-layer gating network with softmax output. Except for ablations, the value of $k$ in top-$k$ has been kept fixed at $k=2$, and the value of $p$ in top-$p$ has been fixed at $p=0.7$. All networks use GELU activations. The model is trained with AdamW ($\eta=10^{-3}$, batch size $1024$) minimizing RMSE over $50$ epochs per training-size increment.\\
\noindent 
\textbf{Evaluation Metrics.}
To study how generalization error varies with training size, we partition the available data into $100$ equally spaced increments $n_j = \lfloor \frac{j}{100}\cdot\text{train size}\rfloor$ for $j=1,\ldots,100$, spanning $1\%$ to $100\%$ of the training data. At each increment, we record the test MSE, which is our primary measure of generalization. Since the theorems bound the estimation error rather than the total test error, we subtract an estimated irreducible floor (noise and approximation error) $\hat a$ obtained by fitting $\mathrm{MSE}(n)=a+b\,n^{-\beta}$ to the test MSE, and we study the resulting \emph{excess test MSE} $\mathrm{MSE}(n)-\hat a$ on a log--log scale. We report the least-squares slope of the excess test MSE and compare it with the slope of the corresponding bound shape, i.e., the right-hand side of the bound with $L=1$, rescaled to pass through the first data point (so that only its slope is meaningful), evaluated over the same range of $n$. We treat the general case \eqref{eqn:complexity_general}, the top-$k$ case \eqref{eqn:complexity_top-k}, and the top-$p$ case \eqref{eqn:complexity_top-p} in turn. The floor-fitting procedure, the exact bound shapes, and results for all configurations are given in Appendix~\ref{app:experiments}.\\
\noindent
\textbf{Empirical rate in $n$.}
Figure~\ref{fig:main_loglog} plots the excess test MSE against the number of training samples $n$ (log--log) for the general, Top-$k$ ($k=2$) and Top-$p$ ($p=0.7$) MoE with $K=4$, for two input--output configurations, $(d_{\mathrm{in}},d_{\mathrm{out}})=(4,8)$ and $(16,8)$; the remaining ten configurations are in Appendices ~\ref{app:general_experiments}, ~\ref{app:topk_experiments} \& ~\ref{app:topp_experiments}. The irreducible part (noise and approximation error) is removed by a fitted floor, and each dashed-curve is the shape of the corresponding bound (Theorems~\ref{thm:complexity_general}, \ref{thm:complexity_top_k} and \ref{thm:complexity_top-p}; for Top-$k$, without the routing term), rescaled to the first data point, so that only its slope is meaningful. In all three routing schemes, the excess error decays at a rate close to $n^{-1}$. For $(4,8)$ the fitted slopes are $-1.03$, $-0.88$ and $-0.83$ for the general, Top-$k$ and Top-$p$ models, against bound-shape slopes of $-0.84$, $-0.85$ and $-0.83$. For $(16,8)$ they are $-0.99$, $-0.84$ and $-0.84$, against $-0.82$, $-0.83$ and $-0.80$. Thus in the two configurations shown, the measured decay is at least as steep as that of the bound, as expected of an upper bound, although the margins for Top-$k$ and Top-$p$ are within fitting error. This does not hold in every configuration: over all twelve, the mean fitted slopes are $-1.00$ (general), $-0.83$ (Top-$k$) and $-0.80$ (Top-$p$), and several Top-$k$ runs decay more slowly than the bound shape (e.g. $-0.62$ for $(32,8)$), which we attribute to optimization error and to the difficulty of identifying the floor and exponent jointly (Appendix~\ref{app:experiments}). \\
\noindent
\textbf{Comparison with Top-$k$.}
The mean empirical slope is $-0.81$ for Top-$p$ and $-0.83$ for Top-$k$, and the configurations that deviate most are largely shared between the two routing rules. This agrees with the remark following Theorem~\ref{thm:complexity_top-p}: with at most $\kappa$ active experts, Top-$p$ has the same statistical rate as Top-$k$ with $k=\kappa$, and both scale with the total parameter count $D$ rather than with the number of active experts.\\
\noindent
\paragraph{Routing Behavior of Top-$p$.}
Table~\ref{tab:shannon_entropy_top-p} reports, for the final models ($n=8\times10^5$, $K=4$, $p=0.7$) evaluated on the test set, the average Shannon entropy of the gate distribution and the average number of active experts $\bar k_x$, where $k_x$ is the smallest number of experts whose cumulative gate probability reaches $p$. The entropies are small in all configurations (at most $0.166$, against a maximum of $\log 4\approx 1.386$), and $\bar k_x$ is close to $1$ (e.g., $1.03$ for $(d_{\mathrm{in}},d_{\mathrm{out}})=(4,8)$, and $1.07$ for $(d_\iin, d_\out) = (16,8)$). Since $k_x\ge 1$, the fraction of test inputs with $k_x\ge 2$ is at most $\bar k_x-1$, so the learned router is nearly hard. Figure~\ref{fig:topp_active_experts} shows that for $(4,8)$, $\bar k_x$ decreases from about $1.21$ at $n=8\times10^3$ to about $1.03$ at $n=8\times10^5$ and for $(16,8)$ the same drops from $1.37$ at $n=8\times 10^3$ to about $1.07$ at $n=8\times 10^5$, flattening for $n\gtrsim10^5$ [See Appendix~\ref{app:top-p_shannon_full} for all the $12$ combinations]. This is far below the worst-case count $\lceil Kp\rceil=3$ used in our bound, which is therefore conservative for trained routers. It also means that our experiments exercise multi-expert routing only rarely, so they measure the rate in $n$ and do not probe the dependence of the bound on $p$ or on the number of active experts. We report the decrease of $\bar k_x$ as an observation and do not attribute it to a specific cause (it may reflect gate sharpening \citep{chen2022towards} or optimization dynamics under a fixed number of epochs). Each point is a single training run.

\begin{table}[h]
\caption{Top-$p$ MoE ($K=4$, $p=0.7$), final models ($n=8\times10^5$), test set: Shannon entropy of the gate distribution / average number of active experts $\bar k_x$.}
\label{tab:shannon_entropy_top-p}
\begin{center}
\small
\begin{tabular}{l|lll}
$d_{\mathrm{in}}$ & \multicolumn{3}{c}{$d_{\mathrm{out}}$} \\
\cline{2-4}
 & $\qquad 8$ & $\qquad 16$ & $\qquad 32$ \\
\hline
$4$  & $0.020$ / $1.03$ & $0.032$ / $1.05$ & $0.073$ / $1.11$ \\
$8$  & $0.037$ / $1.06$ & $0.052$ / $1.08$ & $0.032$ / $1.05$ \\
$16$ & $0.043$ / $1.07$ & $0.070$ / $1.11$ & $0.108$ / $1.17$ \\
$32$ & $0.026$ / $1.04$ & $0.062$ / $1.09$ & $0.166$ / $1.12$ \\
\end{tabular}
\end{center}
\end{table}

\begin{figure}[h]
    \centering
    \includegraphics[width=1.0\linewidth]{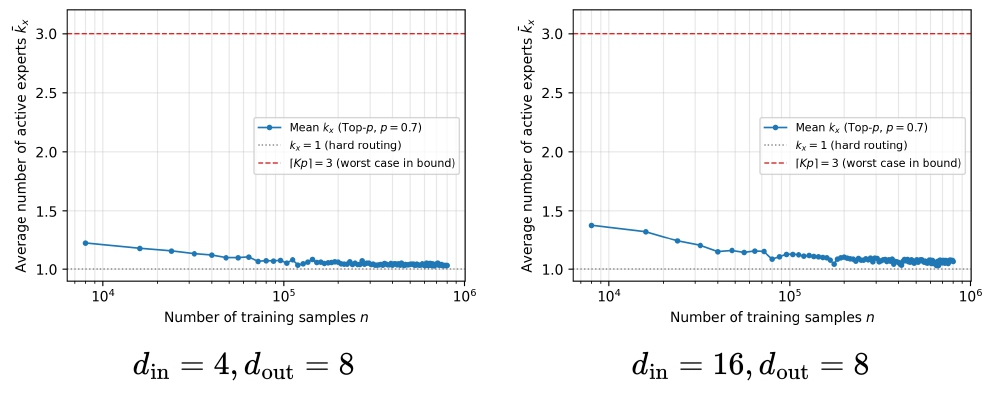}
    \caption{Average number of experts selected by Top-$p$ routing ($K=4$, $p=0.7$) on the test set versus training size, for $(d_{\mathrm{in}},d_{\mathrm{out}})=(4,8) \& (16,8)$. Dotted: hard routing ($k_x=1$); dashed: worst-case count $\lceil Kp\rceil=3$ used in the bound.}
    \label{fig:topp_active_experts}
\end{figure}

\section{CONCLUSION}

In this work, we studied the estimation error of least-squares Mixture-of-Experts (MoE) estimators whose experts and gating network are parametric Lipschitz. We represented each MoE as a gate-expert pair in the \emph{Gated Hilbert Space}, whose norm $\|\cdot\|_\Delta$ separates both routing and expert discrepancies. We used its empirical representation to derive an oracle inequality with a fast rate of order $\tilde{\mathcal O}\big((Kd_{\mathrm{pr},e}+d_{\mathrm{pr},g})/n\big)$ for dense MoEs with $K$ experts.

Our results show that sparsity saves compute but not samples: the Top-$k$ and Top-$p$ bounds do not improve on the dense rate $\tilde{\mathcal O}\big((Kd_{\mathrm{pr},e}+d_{\mathrm{pr},g})/n\big)$ and instead carry an additive routing cost governed by the Natarajan dimension of the selector. This cost arises because the selection map is discontinuous in the gating parameters, so Lipschitz continuity alone does not control it; we handle it by partitioning the gating class according to the selection patterns it induces on the sample, and we show that a routing-complexity condition cannot be dropped. For linear softmax gating, the routing cost is of the same order as the gating term. For linear experts, a matching minimax lower bound of order $Kd_{\mathrm{pr},e}/n$ shows that the rates are optimal up to logarithmic factors.

Experiments on synthetic regression tasks show that the excess test MSE decays at a rate close to $n^{-1}$ for general, Top-$k$ and Top-$p$ routing, in line with our bounds. They check the rate in $n$ only: they do not test the dependence on $K$, $k$ or $p$, and the learned Top-$p$ routers are nearly hard ($\bar k_x\approx 1$), so multi-expert routing is rarely exercised.

Several questions remain open. The Natarajan dimension of practical gating networks is not computable and is characterized here only for linear softmax gating, and the approximation error under misspecification is not controlled. Experiments varying $K$, $k$ and $p$, with routers that use several experts adaptively, would test the dependence of the bounds on these quantities. We hope these results provide a foundation for sharper bounds on adaptive routing in MoE models.

\subsection*{AI use statement}

In this work, we used generative AI tools for language editing, structural reorganization, and algebraic cross-checking. We have not used generative AI tools to create research content. Additionally, we used generative AI tools to check the coding experiments described in the paper. We have reviewed all AI-assisted work. We take responsibility for the final content of this work, including text, claims, or artifacts produced with the aid of generative AI.

\bibliographystyle{plainnat}
\bibliography{aistats_ref_main}

\section*{Checklist}

The checklist follows the references. For each question, choose your answer from the three possible options: Yes, No, Not Applicable.  You are encouraged to include a justification to your answer, either by referencing the appropriate section of your paper or providing a brief inline description (1-2 sentences). 
Please do not modify the questions.  Note that the Checklist section does not count towards the page limit. Not including the checklist in the first submission won't result in desk rejection, although in such case we will ask you to upload it during the author response period and include it in camera ready (if accepted).

\textbf{In your paper, please delete this instructions block and only keep the Checklist section heading above along with the questions/answers below.}

\begin{enumerate}

  \item For all models and algorithms presented, check if you include:
  \begin{enumerate}
    \item A clear description of the mathematical setting, assumptions, algorithm, and/or model. [Yes] 
    [\textbf{Description:} Section 3 discusses the mathematical preliminaries of various routing mechanisms and the notations used]
    \item An analysis of the properties and complexity (time, space, sample size) of any algorithm. [Yes] 
    [\textbf{Description:} Section 4 talks about all the propositions, lemmas, and theorems claimed to be true in the work; while the proofs are deferred to the Supplementary Document with appropriate pointer.]
    \item (Optional) Anonymized source code, with specification of all dependencies, including external libraries. [Yes] 
    [\textbf{Description:} The link to the Annonymized GitHub Repository has been provided in Section 5]
  \end{enumerate}

  \item For any theoretical claim, check if you include:
  \begin{enumerate}
    \item Statements of the full set of assumptions of all theoretical results. [Yes][\textbf{Description:} All such statements have been discussed in detail in Section 4.]
    \item Complete proofs of all theoretical results. [Yes] [\textbf{Description:} The proofs have been detailed out in the Supplementary.]
    \item Clear explanations of any assumptions. [Yes] [\textbf{Description:} The assumptions have been made explicit in Section 3.]     
  \end{enumerate}

  \item For all figures and tables that present empirical results, check if you include:
  \begin{enumerate}
    \item The code, data, and instructions needed to reproduce the main experimental results (either in the supplemental material or as a URL). [Yes] [\textbf{Description:} The Annonymized GitHub Respository has been linked to Section 5]
    \item All the training details (e.g., data splits, hyperparameters, how they were chosen). [Yes] [\textbf{Description:} Sections 2,5,6 \& 7 in the Repository discuss about them in detail.]
    \item A clear definition of the specific measure or statistics and error bars (e.g., with respect to the random seed after running experiments multiple times). [Yes] [\textbf{Description:} Sections 11 \& 14 in the Repository discuss about these points.]
    \item A description of the computing infrastructure used. (e.g., type of GPUs, internal cluster, or cloud provider). [Yes] [\textbf{Description:} Sections 12 \& 14 lay out the details of the hardware and environments used.]
  \end{enumerate}

  \item If you are using existing assets (e.g., code, data, models) or curating/releasing new assets, check if you include:
  \begin{enumerate}
    \item Citations of the creator If your work uses existing assets. [Not Applicable]
    \item The license information of the assets, if applicable. [Not Applicable]
    \item New assets either in the supplemental material or as a URL, if applicable. [Not Applicable]
    \item Information about consent from data providers/curators. [Not Applicable]
    \item Discussion of sensible content if applicable, e.g., personally identifiable information or offensive content. [Not Applicable]
  \end{enumerate}

  \item If you used crowdsourcing or conducted research with human subjects, check if you include:
  \begin{enumerate}
    \item The full text of instructions given to participants and screenshots. [Not Applicable]
    \item Descriptions of potential participant risks, with links to Institutional Review Board (IRB) approvals if applicable. [Not Applicable]
    \item The estimated hourly wage paid to participants and the total amount spent on participant compensation. [Not Applicable]
  \end{enumerate}

\end{enumerate}

\clearpage
\appendix
\thispagestyle{empty}


\onecolumn
\aistatstitle{Supplementary Materials for: \\
 Sparsity Saves Compute, Not Samples: Oracle Inequalities for Dense and Sparse Mixture of Experts}
\tableofcontents
\section{Supplementary to \emph{Background}}\label{supp:Background}

\noindent 
\begin{definition}\label{defn:parametric_lipschitz}

A parametrized class of real-valued functions 
\begin{align*}
    F = \left\{ f(\theta;\cdot) \colon \mathcal{X} \subset \rr^{d_\iin}\rightarrow \mathbb{R}^{d_\out}, \theta \in \Theta \subset \rr^{d_\pr}\ \right\}, 
\end{align*}

with $\Theta$ and $\mathcal{X}$ pre-compact, is said to be $L$-Lipschitz if, for all $f \in F$ and $x\in\xx$,
\begin{align*}
    \|f(\theta;x) - f(\theta^\prime;x)\|\leq L_x \|\theta - \theta^\prime\|_2.
\end{align*}

Since $\xx$ is pre-compact, we can uniformly replace $L_x$ with $L$ over the closure of $\xx$ (which is compact). Therefore, we have for all $f\in F$ and for all $x\in\xx$,
\begin{align*}
     \|f(\theta;x) - f(\theta^\prime;x)\|\leq L \|\theta - \theta^\prime\|_2.
\end{align*}

\end{definition}

\noindent
\begin{prop}[Completeness of $\hd$]
\label{prop:completeness}
Let $(\mathcal X\subseteq \rr^{d_\pr}, \mu)$ be a measure space and let $\hd$ be the space of gated MoE predictors defined in Eq.~\ref{eqn:defn_general_moe}
equipped with the norm induced by the inner product defined in Eq.~\eqref{eqn:gated_inner_product}, and after identifying elements $\tilde f,\tilde r$ such that $g_i(x)a_i(x) = h_i(x)b_i(x)$ for $\mu$-almost every $x$ and every $i=1,\ldots,K$.
Then $(\hd, \langle \cdot,\cdot\rangle_\Delta)$ is complete with respect to the metric $d_\Delta(\tilde f,\tilde r) = \|\tilde f - \tilde r\|_\Delta$. Consequently, $\hd$ is a (gated/modulated)  Hilbert space.

\end{prop}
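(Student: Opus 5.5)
The plan is to reduce everything to the completeness of the standard Lebesgue space $L^2(\mu;\rr^{d_\out})^K$ through the embedding already mentioned in the Background section. Define
\[
\Phi:\hd\to L^2(\mu;\rr^{d_\out})^K,\qquad \Phi(\tilde g,\tilde a):=(g_1a_1,\ldots,g_Ka_K).
\]
Here $\hd$ is restricted to elements with $\|\tilde f\|_\Delta<\infty$, which is implicit for a norm. By Eq.~\eqref{eqn:gated_inner_product}, $\langle\tilde f,\tilde r\rangle_\Delta=\langle\Phi(\tilde f),\Phi(\tilde r)\rangle_{L^2(\mu)^K}$, where the right-hand side is the usual product-space inner product $\sum_i\int\langle u_i,v_i\rangle\,d\mu$.

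\textbf{Step 1 (quotient and isometry).} The identification in the statement says exactly that $\tilde f\sim\tilde r$ if and only if $\Phi(\tilde f)=\Phi(\tilde r)$ in $L^2$. Hence $\Phi$ descends to an injective map on the quotient $\hd/\!\sim$, and $\langle\cdot,\cdot\rangle_\Delta$ is well defined there. The linear structure is not the naive componentwise sum of pairs, since that would break $\sum_i g_i=1$. Instead we transport it through $\Phi$: $\tilde f+\tilde r$ is any representative of $\Phi^{-1}(\Phi(\tilde f)+\Phi(\tilde r))$, and scalar multiples are defined similarly. With this structure, $\Phi$ is a linear isometry by construction, and $d_\Delta(\tilde f,\tilde r)=\|\Phi(\tilde f)-\Phi(\tilde r)\|_{L^2}$. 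This agrees with Eq.~\eqref{eqn:distance}.

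\textbf{Step 2 (surjectivity).} I will show that $\Phi$ is onto, so that no closedness argument is needed. Given any $(u_1,\ldots,u_K)\in L^2(\mu;\rr^{d_\out})^K$, take the constant gates $g_i\equiv 1/K$, which are $[0,1]$-valued and sum to one, and the measurable experts $a_i:=Ku_i$. Then $g_ia_i=u_i$, so $(\tilde g,\tilde a)\in\hd$ has finite $\Delta$-norm and $\Phi(\tilde g,\tilde a)=(u_1,\ldots,u_K)$. This also shows that the transported addition in Step 1 is always defined.

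\textbf{Step 3 (completeness).} Let $(\tilde f_m)$ be Cauchy in $d_\Delta$. Then $(\Phi(\tilde f_m))$ is Cauchy in $L^2(\mu)^K$, which is complete by Riesz--Fischer applied coordinatewise. So it converges to some $u$, and by Step 2 we have $u=\Phi(\tilde f)$ for some $\tilde f\in\hd$. Because $\Phi$ is an isometry, $d_\Delta(\tilde f_m,\tilde f)\to0$. Hence $\hd/\!\sim$ is a Hilbert space, isometrically isomorphic to $L^2(\mu;\rr^{d_\out})^K$.

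The main obstacle is conceptual rather than computational: which class the statement refers to. If $\hd$ meant the \emph{parametric} subclass, with Lipschitz gates and experts bounded by $B_\infty$, then Step 2 fails. Step 3 would then require the image to be closed, which generally does not hold for non-compact parameter sets (limits of parametric models need not be parametric). I would therefore prove the statement for the class as written in Eq.~\eqref{eqn:defn_general_moe}, with arbitrary measurable $a_i$, and note this caveat explicitly. A secondary point to verify is the transported vector-space structure in Step 1, since $\Delta$-norm "equality iff zero" only holds modulo $\sim$.
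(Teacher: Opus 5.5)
Your proposal is correct and follows essentially the same route as the paper: embed via $\Phi(\tilde g,\tilde a)=(g_1a_1,\ldots,g_Ka_K)$ into $L^2(\mu;\mathbb R^{d_{\mathrm{out}}})^K$, get surjectivity from the constant gates $g_i\equiv 1/K$ with experts $a_i=K u_i$, and transfer completeness from Riesz--Fischer. Your restriction to elements of finite $\Delta$-norm and your transport of the linear structure through $\Phi$ tighten the paper's Step~2, which calls linearity ``immediate'' even though componentwise sums of gate tuples violate $\sum_i g_i=1$.
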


\begin{proof}
    Let $H \coloneqq L^2(\mathcal X,\mu;\mathbb R^{d_{\mathrm{out}}})^K$ denote the Hilbert space of $K$-tuples of square-integrable $\mathbb R^{d_{\mathrm{out}}}$-valued functions, equipped with the inner product
\[
\langle \varphi,\psi \rangle_H \coloneqq \sum_{i=1}^K \int_{\mathcal X} \langle \varphi_i(x),\psi_i(x)\rangle \, d\mu(x), \qquad \varphi=(\varphi_1,\ldots,\varphi_K),\ \psi = (\psi_1,\ldots,\psi_K) \in H.
\]

\medskip
\noindent \textbf{Step 1: Construction of a map $\Phi$.}
Define
\begin{align*}
    \Phi : \hd / \!\sim \; \longrightarrow \; H, \qquad \Phi(\tilde f) \coloneqq (g_1 a_1, \ldots, g_K a_K),
\end{align*}

where $\sim$ is the equivalence relation identifying $\tilde f \sim \tilde r$ whenever $g_ia_i = h_ib_i$ $\mu$-a.e.\ for all $i$. This map is well defined on the quotient by construction.

\medskip
\noindent \textbf{Step 2: $\Phi$ is a linear isometry.}
By definition of the bilinear form in \eqref{eqn:gated_inner_product}
\begin{align*}
    \langle \tilde f, \tilde r \rangle_\Delta = \int_{\mathcal X} \sum_{i=1}^K \langle g_i(x)a_i(x), h_i(x)b_i(x)\rangle \, d\mu(x) = \langle \Phi(\tilde f), \Phi(\tilde r) \rangle_H,
\end{align*}

so $\Phi$ preserves the inner product exactly, and in particular
\begin{align*}
    \|\tilde f\|_\Delta = \|\Phi(\tilde f)\|_H.
\end{align*}

Linearity of $\Phi$ is immediate from linearity of pointwise products in $(g_i,a_i)$ upon fixing representatives, and injectivity on $\hd/\!\sim$ follows directly from the definition of $\sim$: if $\Phi(\tilde f) = \Phi(\tilde r)$ then $g_ia_i = h_ib_i$ $\mu$-a.e.\ for all $i$, i.e.\ $\tilde f \sim \tilde r$.

\medskip
\noindent \textbf{Step 3: $\Phi$ is surjective.}
Let $\varphi = (\varphi_1,\ldots,\varphi_K) \in L$ be arbitrary. Define
\begin{align*}
 g_i(x) \equiv \frac{1}{K}, \qquad a_i(x) \coloneqq K\varphi_i(x), \qquad i=1,\ldots,K.
\end{align*}

Then $g_i:\mathbb R^{d_{\mathrm{in}}}\to[0,1]$ with $\sum_{i=1}^K g_i(x) = 1$ for all $x$, and each $a_i:\mathbb R^{d_{\mathrm{in}}}\to\mathbb R^{d_{\mathrm{out}}}$ is a valid expert map. Hence $\tilde f = (\tilde g,\tilde a) \in \hd$, and
\begin{align*}
 \Phi(\tilde f) = (g_1a_1,\ldots,g_Ka_K) = (\varphi_1,\ldots,\varphi_K) = \varphi.
\end{align*}

Thus $\Phi$ is onto.

\medskip
\noindent \textbf{Step 4: $\Phi$ is a bijective linear isometry.}
Combining Steps 2 and 3, $\Phi: \hd/\!\sim \; \to H$ is a linear bijection satisfying $\langle \Phi(\tilde f),\Phi(\tilde r)\rangle_H = \langle \tilde f,\tilde r\rangle_\Delta$ for all $\tilde f, \tilde r$. Hence $\Phi$ is an isometric isomorphism of inner-product spaces.

\medskip
\noindent \textbf{Step 5: Transfer of Completeness through the isometry.}
By the Riesz--Fischer theorem, $L^2(\mathcal X,\mu;\mathbb R^{d_{\mathrm{out}}})$ is complete, and a finite product of complete inner-product spaces (with the product inner product) is complete; hence $H$ is a Hilbert space.

Let $(\tilde f_n)_{n\ge 1}$ be a $d_\Delta$-Cauchy sequence in $\hd$. By Step 2,
\begin{align*}
    \|\Phi(\tilde f_n) - \Phi(\tilde f_m)\|_H = \|\tilde f_n - \tilde f_m\|_\Delta \longrightarrow 0 \quad \text{as } n,m\to\infty,
\end{align*}

so $(\Phi(\tilde f_n))_{n\ge 1}$ is Cauchy in $H$. By completeness of $H$, there exists $\varphi = (\varphi_1,\ldots,\varphi_K) \in H$ such that $\Phi(\tilde f_n) \to \varphi$ in $H$.

By surjectivity of $\Phi$ (Step 3), there exists $\tilde f \in \hd$, e.g.
\begin{align*}
  \tilde f = \left(\tfrac{1}{K},\ldots,\tfrac{1}{K},\, K\varphi_1,\ldots,K\varphi_K\right),
\end{align*}

such that $\Phi(\tilde f) = \varphi$. Then, using the isometry property once more,
\begin{align*}
   \|\tilde f_n - \tilde f\|_\Delta = \|\Phi(\tilde f_n) - \Phi(\tilde f)\|_H = \|\Phi(\tilde f_n) - \varphi\|_H \longrightarrow 0.
\end{align*}

Hence $\tilde f_n \to \tilde f$ in $d_\Delta$, with $\tilde f \in \hd$.

\medskip

Every $d_\Delta$-Cauchy sequence in $\hd$ converges to an element of $\hd$. Therefore $(\hd, \langle\cdot,\cdot\rangle_\Delta)$ is complete, and being an inner-product space that is complete with respect to the induced metric, $\hd/\sim$ is a Hilbert space; in other words, $\hd$ is a (gated/modulated) Hilbert Space. 
\end{proof}

\section{Supplementary to \emph{Estimation Error Rates For Various Parametric MoE Classes}}

\subsubsection{Proofs of Lemma~\ref{lem:oracle} }
\begin{proof}[Proof of Lemma~\ref{lem:oracle}]
Let $f^\ast\in\hd$ attain $\inf_{f\in\hd}\|f-f_\rho\|_n^2$ (otherwise take a near-minimizer and pass to the limit), and let $h:=\hat f-f^\ast\in\hd-\hd$. Since $\hat f$ minimizes the empirical squared loss, expanding $Y_t=f_\rho(x_t)+\xi_t$ gives the basic inequality
\[
\|\hat f-f_\rho\|_n^2\le\|f^\ast-f_\rho\|_n^2+\frac2n\sum_{t=1}^n\langle\xi_t,h(x_t)\rangle .
\]
Write $Z$ for the supremum in \eqref{eqn:offset_def}, so that $\frac2n\sum_t\langle\xi_t,h(x_t)\rangle\le Z+\frac14\|h\|_n^2$. Using $\|h\|_n^2\le2\|\hat f-f_\rho\|_n^2+2\|f^\ast-f_\rho\|_n^2$,
\[
\|\hat f-f_\rho\|_n^2\le\tfrac32\|f^\ast-f_\rho\|_n^2+\tfrac12\|\hat f-f_\rho\|_n^2+Z .
\]
Rearranging and taking expectations (the design is fixed, so $\|f^\ast-f_\rho\|_n^2$ is deterministic) gives \eqref{eq:oracle_ineq}. If $f_\rho\in\hd$, take $f^\ast=f_\rho$; then $\|h\|_n^2=\|\hat f-f_\rho\|_n^2\le Z+\frac14\|h\|_n^2$, so $\|\hat f-f_\rho\|_n^2\le\frac43 Z$.
\end{proof}

\subsection{General MoE Class}
\begin{prop}\label{prop:general_entropy_estimate}
Suppose the gating and expert classes are parametric $L$-Lipschitz and the experts are uniformly bounded, $B_\infty:=\sup_{\tilde a\in\mathcal E}\max_{i\le K}\sup_{x\in\xx}\|a_i(x)\|<\infty$. Then, for any probability measure $\mu$ on $\xx$,
\begin{align*}
\mathcal N\left(\epsilon,\hd,d_\Delta\right)\lesssim
\left(\frac{2\sqrt2\,LG_\ast \sqrt{d_\out}}{\epsilon}\right)^{d_\pre K}
\left(\frac{2\sqrt2\,L \sqrt{K}\,B_\infty}{\epsilon}\right)^{d_\prg}.
\end{align*}
\end{prop}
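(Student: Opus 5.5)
The approach is to split $d_\Delta$ by adding and subtracting $g_i b_i$, as announced in the main text, and then to cover the gating and expert parameter sets separately. For $\tilde f=(\tilde g,\tilde a)$ and $\tilde r=(\tilde h,\tilde b)$ write $g_ia_i-h_ib_i=g_i(a_i-b_i)+(g_i-h_i)b_i$. By the triangle inequality in $L^2(\mu;\rr^{d_\out})^K$,
\[
d_\Delta(\tilde f,\tilde r)\le\Big[\int_\xx\sum_{i=1}^K g_i^2\|a_i-b_i\|^2d\mu\Big]^{1/2}+\Big[\int_\xx\sum_{i=1}^K (g_i-h_i)^2\|b_i\|^2d\mu\Big]^{1/2}=:T_e+T_g .
\]
It therefore suffices to make each of $T_e$ and $T_g$ at most $\epsilon/2$. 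Using $(u+v)^2\le2u^2+2v^2$ instead gives $\epsilon/\sqrt2$ per term; either choice only affects the constants $2\sqrt2$.

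\textbf{Expert term.} The Lipschitz parametrization gives $\|a_i(x)-b_i(x)\|\le L\|\theta_{a_i}-\theta_{b_i}\|_2$ for all $x$. The $\sqrt{d_\out}$ factor suggests working coordinatewise on the $d_\out$ output components, or equivalently comparing the Euclidean norm with the max-norm. Hence $T_e\le L\max_i\|\theta_{a_i}-\theta_{b_i}\|\sqrt{d_\out}\,G_\ast$, after pulling out $\int\sum_i g_i^2\,d\mu\le G_\ast^2$. So it is enough to cover each of the $K$ expert parameter sets $\Theta_\mathcal E\subset\rr^{d_\pre}$ at radius $\delta_e=\epsilon/(2\sqrt2 LG_\ast\sqrt{d_\out})$. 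The standard volumetric bound gives $\mathcal N(\delta_e,\Theta_\mathcal E,\|\cdot\|_2)\lesssim(\mathrm{diam}/\delta_e)^{d_\pre}$, where the diameter of the pre-compact set is absorbed into $\lesssim$. Taking the product over the $K$ experts yields the factor $(2\sqrt2LG_\ast\sqrt{d_\out}/\epsilon)^{d_\pre K}$.

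\textbf{Gating term.} Bound $\|b_i\|\le B_\infty$. Each gate coordinate is $L$-Lipschitz in $\thg$, so $|g_i(x)-h_i(x)|\le L\|\thg-\theta_{\tilde h}\|$. Summing over $i$ gives $T_g\le\sqrt K B_\infty L\|\thg-\theta_{\tilde h}\|$, using that $\mu$ is a probability measure. It is therefore enough to cover $\Theta_\mathcal G\subset\rr^{d_\prg}$ at radius $\epsilon/(2\sqrt2L\sqrt KB_\infty)$, which gives the second factor.

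\textbf{Assembling the cover.} The product of the two parameter nets, mapped through the parametrization, is an $\epsilon$-net of $\hd$ in $d_\Delta$. This yields the stated product bound, and since no property of $\mu$ beyond being a probability measure is used, the argument applies to the empirical measure as well.

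The main obstacle is bookkeeping rather than an analytic difficulty. One issue is matching the exact constants, in particular the precise origin of $\sqrt{d_\out}$, which depends on whether the Lipschitz condition is read as a vector-norm or a per-coordinate statement. The other is noting that the implicit constant in $\lesssim$ hides the diameters of $\Theta_\mathcal E$ and $\Theta_\mathcal G$ and the volumetric factor $3^{d}$, so the bound is meaningful for $\epsilon$ below the corresponding scales.
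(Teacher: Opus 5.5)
Your proposal is correct and follows the paper's proof. Both insert $g_ib_i$, bound the expert part by $G_\ast$ times a uniform expert discrepancy and the routing part by $\sqrt K B_\infty$ times a uniform gate discrepancy, and take the product of the two parametric covers. The paper covers the coordinate functions in sup-norm via the standard parametric-Lipschitz estimate, whereas you cover $\Theta_{\mathcal E}^K$ and $\Theta_{\mathcal G}$ directly, which is the same argument one level down. Your Minkowski step $d_\Delta\le T_e+T_g$ is also slightly sharper than the paper's $\sqrt2(T_e+T_g)$.

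One harmless slip is in your aside: from $d_\Delta^2\le 2T_e^2+2T_g^2$ you need each term at most $\epsilon/2$, not $\epsilon/\sqrt2$. The radii you actually choose give $T_e,T_g\le\epsilon/(2\sqrt2)$, so the conclusion is unaffected.
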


\begin{proof}\label{proof:general_entropy_estimate}
\textbf{Step 1: Expert class.}
Write $\mathcal E=\{\tilde a=(a_1,\ldots,a_K)\}$ with $a_i=(a^i_1,\ldots,a^i_{d_\out}):\xx\to\mathbb R^{d_\out}$, and for $\tilde a,\tilde b\in\mathcal E$ let
\begin{align*}
\|\tilde a-\tilde b\|_{\mathcal E,\infty}:=\max_{1\le i\le K}\max_{1\le j\le d_\out}\|a^i_j-b^i_j\|_\infty,
\qquad\text{so that}\qquad
\sup_{x}\|a_i(x)-b_i(x)\|\le\sqrt{d_\out}\,\|\tilde a-\tilde b\|_{\mathcal E,\infty}.
\end{align*}
Each expert is an $L$-Lipschitz function of its $d_\pre$ parameters, and the $K$ experts have separate parameters. The standard covering estimate for parametric Lipschitz families \citep{bartlett2013lecture14}, in which $\lesssim$ hides the diameter of the parameter set, gives
\begin{align}\label{eq:param_expert_cover}
\mathcal N\left(\delta,\mathcal E,\|\cdot\|_{\mathcal E,\infty}\right)\le\prod_{i=1}^K\mathcal N\left(\delta,\mathcal E_i,\|\cdot\|_\infty\right)\lesssim\left(\frac{L}{\delta}\right)^{d_\pre K},
\end{align}
where $\mathcal E_i$ denotes the class of the $i$-th expert.

\textbf{Step 2: Gating class.}
For $\tilde g,\tilde h\in\mathcal G$ let $\|\tilde g-\tilde h\|_{\mathcal G,\infty}:=\max_{1\le i\le K}\|g_i-h_i\|_\infty$. In the same way,
\begin{align}\label{eq:param_gate_cover}
\mathcal N\left(\delta,\mathcal G,\|\cdot\|_{\mathcal G,\infty}\right)\lesssim\left(\frac{L}{\delta}\right)^{d_\prg}.
\end{align}

\textbf{Step 3: MoE class.}
Let $\tilde{f}=(\tilde g,\tilde a)$ and $\tilde{r}=(\tilde h,\tilde b)\in\hd$. Inserting $w=g_ib_i$ in \eqref{eqn:distance} and using $\|u-v\|^2\le2\|u-w\|^2+2\|w-v\|^2$ together with $\sqrt{s+t}\le\sqrt s+\sqrt t$,
\begin{align}\label{eqn:moe_split}
d_\Delta(\tilde{f},\tilde{r})
\le\sqrt2\left(\int_\xx\sum_{i=1}^K g_i(x)^2\|a_i(x)-b_i(x)\|^2d\mu(x)\right)^{1/2}
+\sqrt2\left(\int_\xx\sum_{i=1}^K\|b_i(x)\|^2|g_i(x)-h_i(x)|^2d\mu(x)\right)^{1/2}.
\end{align}
The first term separates the expert discrepancy, weighted by the gates, and the second the routing discrepancy, weighted by the expert magnitudes. Since $\sum_i\|b_i(x)\|^2\le KB_\infty^2$ for every $x$, and $\mu$ is a probability measure,
\[
d_\Delta(f,r)\le\sqrt2\,G_\ast\sqrt{d_\out} \|\tilde a-\tilde b\|_{\mathcal E,\infty}+\sqrt2\,\sqrt {K} \,B_\infty\,\|\tilde g-\tilde h\|_{\mathcal G,\infty}, \quad G_\ast = \sup_{\tilde{g}\in\mathcal G}\left(\int_\xx\sum_{i=1}^Kg_i(x)^2d\mu(x)\right)\leq 1
\]
Hence $d_\Delta(f,r)\le\epsilon$ whenever $\|\tilde a-\tilde b\|_{\mathcal E,\infty}\le\delta_E:=\epsilon/(2\sqrt2G_\ast \sqrt{d_\out})$ and $\|\tilde g-\tilde h\|_{\mathcal G,\infty}\le\delta_G:=\epsilon/(2\sqrt2\sqrt K B_\infty)$, so the product of a $\delta_E$-cover of $\mathcal E$ and a $\delta_G$-cover of $\mathcal G$ is an $\epsilon$-cover of $\hd$. By \eqref{eq:param_expert_cover} and \eqref{eq:param_gate_cover},
\begin{align*}
\log\mathcal N\left(\epsilon,\hd,d_\Delta\right)
&\le\log\mathcal N\left(\delta_E,\mathcal E,\|\cdot\|_{\mathcal E,\infty}\right)+\log\mathcal N\left(\delta_G,\mathcal G,\|\cdot\|_{\mathcal G,\infty}\right)\\
&\lesssim d_\pre K\log\frac{2\sqrt2\,LG_\ast \sqrt{d_\out}}{\epsilon}+d_\prg\log\frac{2\sqrt2\,L \sqrt{K}\,B_\infty}{\epsilon},
\end{align*}
finishing the proof of the proposition. 
\end{proof}

\noindent
\subsubsection{Proof of Theorem \ref{thm:complexity_general}}
\begin{proof}\label{proof:complexity_general}
An element $\tilde{f}=(\tilde g,\tilde a)\in\hd$ computes $\tilde{f}(x)=\sum_{i=1}^K g_i(x)a_i(x)$. We use the norms $\|\cdot\|_{\mathcal E,\infty}$ and $\|\cdot\|_{\mathcal G,\infty}$ of Proposition~\ref{prop:general_entropy_estimate}, for which $\sup_x\|a_i(x)-b_i(x)\|\le\sqrt{d_\out}\,\|\tilde a-\tilde b\|_{\mathcal E,\infty}$, together with the parametric covering estimates \eqref{eqn:topk_base}.

\textbf{Step 1: Pointwise decomposition.}
Fix $x_1,\ldots,x_n\in\xx$ and let $\rho_n(\tilde{f},\tilde{r}):=\max_{t\le n}\|\tilde{f}(x_t)-\tilde{r}(x_t)\|$, so that $\|\tilde{f}-\tilde{r}\|_{L^2(P_n)}\le\rho_n(\tilde{f},\tilde{r})$. For $\tilde{f}=(\tilde g,\tilde a)$, $\tilde{r}=(\tilde h,\tilde b)\in\hd$ and any $t$, inserting $g_ib_i$ as in \eqref{eqn:moe_split} separates the expert and routing discrepancies:
\begin{align*}
\|\tilde{f}(x_t)-\tilde{r}(x_t)\|
&\le\sum_{i=1}^K g_i(x_t)\|a_i(x_t)-b_i(x_t)\|+\sum_{i=1}^K|g_i(x_t)-h_i(x_t)|\,\|b_i(x_t)\|\\
&\le\sqrt{d_\out}\,\|\tilde a-\tilde b\|_{\mathcal E,\infty}+KB_\infty\,\|\tilde g-\tilde h\|_{\mathcal G,\infty},
\end{align*}
using $\sum_i g_i(x_t)=1$. Hence
\begin{align}\label{eqn:dense_lip}
\rho_n(\tilde{f},\tilde{r})\le\sqrt{d_\out}\,\|\tilde a-\tilde b\|_{\mathcal E,\infty}+KB_\infty\,\|\tilde g-\tilde h\|_{\mathcal G,\infty}.
\end{align}

\textbf{Step 2: Entropy.}
By \eqref{eqn:dense_lip}, $\rho_n(f,r)\le\epsilon$ whenever $\|\tilde a-\tilde b\|_{\mathcal E,\infty}\le\epsilon/(2\sqrt{d_\out})$ and $\|\tilde g-\tilde h\|_{\mathcal G,\infty}\le\epsilon/(2KB_\infty)$. The routing map of a dense MoE is continuous in the gating parameters, so no partition of the gating class is needed, and by \eqref{eqn:topk_base}, for every sample,
\begin{align}\label{eqn:dense_entropy}
\log\mathcal N\big(\epsilon,\hd,L^2(P_n)\big)\lesssim d_\pre K\log\frac{2L\sqrt{d_\out}}{\epsilon}+d_\prg\log\frac{2LKB_\infty}{\epsilon}.
\end{align}

\textbf{Step 3: Oracle inequality.}
Taking $\alpha=\gamma$ in \eqref{eqn:offset_rad} removes the entropy integral, so
\[
\phi(\hd,n)\le\inf_{\gamma>0}\Big\{\frac{\log\mathcal N\big(\gamma,\hd,L^2(P_n)\big)}{cn}+4\gamma\Big\}.
\]
By \eqref{eqn:dense_entropy}, the right-hand side equals $\frac{D\log(1/\gamma)}{cn}+4\gamma$ up to terms not depending on $\gamma$. Setting its derivative $-D/(cn\gamma)+4$ to zero gives $\gamma^\ast=D/(4cn)$. Substituting $\gamma^\ast$,
\[
\phi(\hd,n)\lesssim\frac{D}{n}+\frac{d_\pre K\log\!\big(\frac{8cL\sqrt{d_\out}\,n}{D}\big)+d_\prg\log\!\big(\frac{8cLKB_\infty\,n}{D}\big)}{n},
\]
and absorbing the absolute constant $8c$ into $\lesssim$ gives \eqref{eqn:complexity_general}.
\end{proof}

\noindent
\subsection{Top-k MoE Class}

\subsubsection{Proof of Theorem \ref{thm:complexity_top_k}}\label{proof:complexity_top_k}
\begin{proof}
\textbf{Notation.} An element $\tilde{f}=(\tilde g,\tilde a)\in\hdk$ computes
\[
    \tilde{f}(x)=\sum_{i\in\sigma_{\tilde g}(x)}\hat g_i(x)\,a_i(x),
    \qquad
    \hat g_i(x)=\frac{g_i(x)}{\sum_{j\in\sigma_{\tilde g}(x)}g_j(x)}.
\]
For $\tilde a,\tilde b\in\mathcal E$ and $\tilde g,\tilde h\in\mathcal G$, let
$\|\tilde a-\tilde b\|_{\mathcal E,\infty}:=\max_{i\le K}\max_{j\le d_\out}\|a_{ij}-b_{ij}\|_\infty$ and
$\|\tilde g-\tilde h\|_{\mathcal G,\infty}:=\max_{i\le K}\|g_i-h_i\|_\infty$. Then
$\sup_x\|a_i(x)-b_i(x)\|\le\sqrt{d_\out}\,\|\tilde a-\tilde b\|_{\mathcal E,\infty}$ for every $i$. By the standard covering estimate for parametric Lipschitz families \citep{bartlett2013lecture14},
\begin{align}\label{eqn:topk_base}
    \mathcal N(\delta,\mathcal E,\|\cdot\|_{\mathcal E,\infty})\lesssim\Big(\frac{L}{\delta}\Big)^{d_\pre K},
    \qquad
    \mathcal N(\delta,\mathcal G,\|\cdot\|_{\mathcal G,\infty})\lesssim\Big(\frac{L}{\delta}\Big)^{d_\prg}.
\end{align}
 
Since the map $\tilde g\mapsto\sigma_{\tilde g}(x)$ is discontinuous: two gates that are arbitrarily close in $\|\cdot\|_{\mathcal G,\infty}$ can select different experts near ties, and there the outputs differ by an amount that does not shrink with $\|\tilde g-\tilde h\|_{\mathcal G,\infty}$. We therefore fix the sample, split the gating class according to the selection pattern it induces on the sample, and cover each piece, on which selection is constant.
 
Fix $x_1,\ldots,x_n\in\xx$ and define $\rho_n(\tilde{f},\tilde{r}):=\max_{1\le t\le n}\|\tilde{f}(x_t)-\tilde{r}(x_t)\|$. Since $\|f-r\|_{L^2(P_n)}\le\rho_n(f,r)$,
\begin{align}\label{eqn:topk_rho}
    \mathcal N\big(\epsilon,\hdk,L^2(P_n)\big)\le\mathcal N\big(\epsilon,\hdk,\rho_n\big).
\end{align}
 
\textbf{Step 1: Partition by selection patterns.}
For $\tilde g\in\mathcal G$, let $\pi(\tilde g):=\big(\sigma_{\tilde g}(x_1),\ldots,\sigma_{\tilde g}(x_n)\big)$, and let $\Pi:=\{\pi(\tilde g):\tilde g\in\mathcal G\}$. Since $\mathcal S_k$ takes $\binom{K}{k}$ values and has Natarajan dimension $d_N$, Natarajan's lemma \citep{natarajan1989learning}; \citep[Lemma~29.4]{shalev2014understanding} gives
\begin{align}\label{eqn:topk_patterns}
    |\Pi|\le n^{d_N}\binom{K}{k}^{2d_N},
    \qquad\text{hence}\qquad
    \log|\Pi|\le d_N\Big(\log n+2k\log\frac{eK}{k}\Big),
\end{align}
using $\log\binom{K}{k}\le k\log(eK/k)$. For $\pi\in\Pi$, let $\mathcal G_\pi:=\{\tilde g\in\mathcal G:\pi(\tilde g)=\pi\}$ and $\mathcal H_\pi:=\{(\tilde g,\tilde a):\tilde g\in\mathcal G_\pi,\ \tilde a\in\mathcal E\}$. Then $\hdk=\bigcup_{\pi\in\Pi}\mathcal H_\pi$, so
\begin{align}\label{eqn:topk_union}
    \mathcal N(\epsilon,\hdk,\rho_n)\le|\Pi|\,\max_{\pi\in\Pi}\mathcal N(\epsilon,\mathcal H_\pi,\rho_n).
\end{align}
 
\textbf{Step 2: Perturbation bound inside one pattern.}
Fix $\pi\in\Pi$, $\tilde{f}=(\tilde g,\tilde a)$, $\tilde{r}=(\tilde h,\tilde b)\in\mathcal H_\pi$, and $t\le n$. Both models select the same set $A:=\sigma_{\tilde g}(x_t)=\sigma_{\tilde h}(x_t)$ at $x_t$, so no expert outside of the index set $A$ is selected by any one of them. Write $\delta:=\|\tilde g-\tilde h\|_{\mathcal G,\infty}$, and let $S:=\sum_{j\in A}g_j(x_t)$ and $S':=\sum_{j\in A}h_j(x_t)$. Using $\sum_{i\in A}\hat g_i(x_t)=1$ and Assumption~\ref{assump:topk}(i),
\begin{align}
    \|\tilde{f}(x_t)-\tilde{r}(x_t)\|
    &\le\sum_{i\in A}\hat g_i(x_t)\|a_i(x_t)-b_i(x_t)\|+\sum_{i\in A}|\hat g_i(x_t)-\hat h_i(x_t)|\,\|b_i(x_t)\| \nonumber\\
    &\le\sqrt{d_\out}\,\|\tilde a-\tilde b\|_{\mathcal E,\infty}+B_\infty\sum_{i\in A}|\hat g_i(x_t)-\hat h_i(x_t)|. \label{eqn:topk_split}
\end{align}
At $x_t$, we have $\hat g_i-\hat h_i=(g_i-h_i)/S+\hat h_i\,(S'-S)/S$. Since $\sum_{i\in A}\hat h_i=1$ and $|S-S'|\le|A|\,\delta=k\delta$,
\begin{align}\label{eqn:topk_weights}
    \sum_{i\in A}|\hat g_i-\hat h_i|\le\frac{1}{S}\Big(\sum_{i\in A}|g_i-h_i|+|S'-S|\Big)\le\frac{2k\delta}{S}\le 2K\delta,
\end{align}
where the last step uses $S\ge k/K$: the $k$ largest of $K$ nonnegative weights summing to one have total at least $k/K$. Substituting \eqref{eqn:topk_weights} into \eqref{eqn:topk_split} and maximizing over $t$,
\begin{align}\label{eqn:topk_lip}
    \rho_n(f,r)\le\sqrt{d_\out}\,\|\tilde a-\tilde b\|_{\mathcal E,\infty}+2KB_\infty\,\|\tilde g-\tilde h\|_{\mathcal G,\infty}
    \qquad\text{for all } f,r\in\mathcal H_\pi.
\end{align}
 
\textbf{Step 3: Covering one pattern class.}
By \eqref{eqn:topk_lip}, $\rho_n(f,r)\le\epsilon$ whenever $\|\tilde a-\tilde b\|_{\mathcal E,\infty}\le\epsilon/(2\sqrt{d_\out})$ and $\|\tilde g-\tilde h\|_{\mathcal G,\infty}\le\epsilon/(4KB_\infty)$, provided $\tilde g,\tilde h\in\mathcal G_\pi$. All $K$ experts must be covered, since different sample points may activate different $k$-subsets. For the gates, the cover centers must lie in $\mathcal G_\pi$. An internal $\delta$-cover of a subset costs at most an external $\delta/2$-cover of the whole set, so by \eqref{eqn:topk_base}
\begin{align}
    \mathcal N(\epsilon,\mathcal H_\pi,\rho_n)
    &\le\mathcal N\Big(\tfrac{\epsilon}{2\sqrt{d_\out}},\mathcal E,\|\cdot\|_{\mathcal E,\infty}\Big)\,
       \mathcal N\Big(\tfrac{\epsilon}{8KB_\infty},\mathcal G,\|\cdot\|_{\mathcal G,\infty}\Big) \nonumber\\
    &\lesssim\Big(\frac{2L\sqrt{d_\out}}{\epsilon}\Big)^{d_\pre K}\Big(\frac{8LKB_\infty}{\epsilon}\Big)^{d_\prg}, \label{eqn:topk_piece}
\end{align}
uniformly in $\pi\in\Pi$.
 
\textbf{Step 4: Entropy of $\hdk$.}
Combining \eqref{eqn:topk_rho}--\eqref{eqn:topk_union} and \eqref{eqn:topk_piece}, for every sample $x_1,\ldots,x_n$,
\begin{align}\label{eqn:topk_entropy}
    \log\mathcal N\big(\epsilon,\hdk,L^2(P_n)\big)
    \lesssim d_\pre K\log\frac{2L\sqrt{d_\out}}{\epsilon}+d_\prg\log\frac{8LKB_\infty}{\epsilon}+d_N\Big(\log n+2k\log\frac{eK}{k}\Big).
\end{align}
 
\textbf{Step 5: Oracle inequality.}
Taking $\alpha=\gamma$ in \eqref{eqn:offset_rad} removes the entropy integral, so
\[
    \phi(\hdk,n)\le\inf_{\gamma>0}\Big\{\frac{\log\mathcal N\big(\gamma,\hdk,L^2(P_n)\big)}{cn}+4\gamma\Big\}.
\]
Let $D:=d_\pre K+d_\prg$. By \eqref{eqn:topk_entropy}, the $\gamma$-dependent part of the right-hand side is $\frac{D\log(1/\gamma)}{cn}+4\gamma$ plus terms not depending on $\gamma$, which is minimized at $\gamma^\ast=D/(4cn)$. Substituting $\gamma^\ast$,
\begin{align*}
    \phi(\hdk,n)\lesssim{}&\frac{D}{n}+\frac{d_\pre K\log\!\big(\frac{L\sqrt{d_\out}\,n}{D}\big)+d_\prg\log\!\big(\frac{LKB_\infty\,n}{D}\big)}{n}
    +\frac{d_N\big(\log n+2k\log(eK/k)\big)}{n}.
\end{align*}
which is exactly \eqref{eqn:complexity_top-k}, completing the proof. 
\end{proof}

\subsection{Top-p MoE Class}
\subsubsection{Proof of Theorem \ref{thm:complexity_top-p}}
\begin{proof}
We use the notation of the proof of Theorem~\ref{thm:complexity_top_k}, with $\sigma_{\tilde g}$ replaced by the top-$p$ active set $S_{\tilde g}$ and
\[
    \tilde{f}(x)=\sum_{i\in S_{\tilde g}(x)}\hat g_i(x)\,a_i(x),
    \qquad
    \hat g_i(x)=\frac{g_i(x)}{\sum_{j\in S_{\tilde g}(x)}g_j(x)}.
\]
 
\textbf{Step 0: Size and mass of the active set.}
Let $g_{(K)}(x)\ge\cdots\ge g_{(1)}(x)$ denote the sorted weights. The $\kappa$ largest of $K$ weights summing to one have total at least $\kappa/K\ge p$. Hence the active set has size at most $\kappa$, i.e.\ $1\le|S_{\tilde g}(x)|\le\kappa$. Moreover, by definition of top-$p$ routing, $\sum_{j\in S_{\tilde g}(x)}g_j(x)\ge p$ for every $x$.
 
\textbf{Step 1: Partition by selection patterns.}
$\mathcal S_{\tilde{g}}$ takes values in $\{A\subseteq[K]:1\le|A|\le\kappa\}$, a set of size $\sum_{j=1}^{\kappa}\binom{K}{j}\le(eK/\kappa)^{\kappa}$. Defining $\pi$, $\Pi$, $\mathcal G_\pi$ and $\mathcal H_\pi$ as in Step~1 of the proof of Theorem~\ref{thm:complexity_top_k}, Natarajan's lemma gives
\begin{align}\label{eqn:topp_patterns}
    \log|\Pi|\le d_{N,p}\Big(\log n+2\kappa\log\frac{eK}{\kappa}\Big),
    \qquad
    \mathcal N(\epsilon,\hdp,\rho_n)\le|\Pi|\max_{\pi\in\Pi}\mathcal N(\epsilon,\mathcal H_\pi,\rho_n).
\end{align}
 
\textbf{Step 2: Perturbation bound inside one pattern.}
Fix $\pi$, $\tilde{f}=(\tilde g,\tilde a)$, $\tilde{r}=(\tilde h,\tilde b)\in\mathcal H_\pi$, and $t\le n$. Both models select the same set $A$ at $x_t$, with $|A|\le\kappa$. Exactly as in \eqref{eqn:topk_split}--\eqref{eqn:topk_weights}, with $|A|\le\kappa$ in place of $k$ and $S\ge p$ (Step~0) in place of $S\ge k/K$,
\[
    \sum_{i\in A}|\hat g_i(x_t)-\hat h_i(x_t)|\le\frac{2|A|\,\delta}{S}\le\frac{2\kappa}{p}\,\delta,
\]
and therefore
\begin{align}\label{eqn:topp_lip}
    \rho_n(f,r)\le\sqrt{d_\out}\,\|\tilde a-\tilde b\|_{\mathcal E,\infty}+\frac{2\kappa B_\infty}{p}\,\|\tilde g-\tilde h\|_{\mathcal G,\infty}
    \qquad\text{for all } f,r\in\mathcal H_\pi.
\end{align}
 
\textbf{Step 3: Covering one pattern class.}
By \eqref{eqn:topp_lip}, it suffices to take $\|\tilde a-\tilde b\|_{\mathcal E,\infty}\le\epsilon/(2\sqrt{d_\out})$ and $\|\tilde g-\tilde h\|_{\mathcal G,\infty}\le p\epsilon/(4\kappa B_\infty)$ with centers in $\mathcal G_\pi$. As before, all $K$ experts must be covered, and an internal cover of $\mathcal G_\pi$ costs at most an external cover of $\mathcal G$ at half the radius. By \eqref{eqn:topk_base},
\[
    \mathcal N(\epsilon,\mathcal H_\pi,\rho_n)\lesssim\Big(\frac{2L\sqrt{d_\out}}{\epsilon}\Big)^{d_\pre K}\Big(\frac{8L\kappa B_\infty}{p\,\epsilon}\Big)^{d_\prg}.
\]
 
\textbf{Steps 4--5: Entropy and oracle inequality.}
Combining with \eqref{eqn:topk_rho} and \eqref{eqn:topp_patterns},
\[
    \log\mathcal N\big(\epsilon,\hdp,L^2(P_n)\big)\lesssim d_\pre K\log\frac{2L\sqrt{d_\out}}{\epsilon}+d_\prg\log\frac{8L\kappa B_\infty}{p\,\epsilon}+d_{N,p}\Big(\log n+2\kappa\log\frac{eK}{\kappa}\Big).
\]
The optimization over $\gamma$ is identical to Step~5 of the proof of Theorem~\ref{thm:complexity_top_k}, again with $\gamma^\ast=D/(4cn)$ and $D=d_\pre K+d_\prg$, and yields \eqref{eqn:complexity_top-p}.
\end{proof}

\subsection{Minimax Optimality of Squared Estimation Error Rates for MoE Class}

\subsubsection{Proof of Theorem \ref{thm:lower_bound}}\label{proof:lower_bound}
\begin{proof}
Throughout, $j(x)$ denotes the group of a design point $x$, and $d=d_{\mathrm{in}}d_\out$.
 
\textbf{Step 1: A subclass common to all three routing schemes.}
For $W=(W_1,\ldots,W_K)$ with $\|W_j\|_F\le R$, let $f_W(x):=W_{j(x)}x$ on the design. We claim that $f_W\in\mathcal H$ for each of the three classes, using the gate $g^0$ of Assumption~\ref{assump:lower}(ii) and the linear experts $a_j(x)=W_jx$ of Assumption~\ref{assump:lower}(iii). On a design point $x$ with $j(x)=j$, we have $g^0_j(x)=1$ and $g^0_i(x)=0$ for $i\neq j$. Hence:
\begin{itemize}
    \item \emph{dense:} $\sum_ig^0_i(x)a_i(x)=a_j(x)$;
    \item \emph{top-$k$:} the selected set contains $j$, its total weight is $1$, so the renormalized weights are $\hat g_j(x)=1$ and $\hat g_i(x)=0$ for $i\neq j$;
    \item \emph{top-$p$:} $g^0_{(1)}(x)=1\ge p$, so the active set is $\{j\}$ and $\hat g_j(x)=1$.
\end{itemize}
In all three cases the MoE output equals $W_jx=f_W(x)$. Moreover $\|f_W(x)\|\le\|W_j\|_F\|x\|\le R\le B_\infty$, so Assumption~\ref{assump:data} holds. Since $\mathcal F_0:=\{f_W\}\subseteq\mathcal H$, it suffices to lower-bound the minimax risk over $\mathcal F_0$.
 
\textbf{Step 2: Packing.}
By the Varshamov--Gilbert bound \citep[Lemma~2.9]{tsybakov2008lower}, since $Kd\ge8$ there exist $\omega^{(0)},\ldots,\omega^{(M)}\in\{0,1\}^{Kd}$ with $M\ge2^{Kd/8}$ and pairwise Hamming distance at least $Kd/8$. Index the coordinates of $\omega$ by $(j,u,v)\in[K]\times[d_\out]\times[d_{\mathrm{in}}]$, fix $a>0$, and set
\[
\big(W^{(\ell)}_j\big)_{uv}:=a\big(2\omega^{(\ell)}_{juv}-1\big)\in\{-a,a\},\qquad f_\ell:=f_{W^{(\ell)}}.
\]
Then $\|W^{(\ell)}_j\|_F^2=a^2d$, so $f_\ell\in\mathcal F_0$ whenever $a^2d\le R^2$. For $\ell\neq\ell'$, write $\Delta_j:=W^{(\ell)}_j-W^{(\ell')}_j$. Each entry of $\Delta_j$ is $0$ or $\pm2a$, and $\sum_j\|\Delta_j\|_F^2=4a^2\rho(\omega^{(\ell)},\omega^{(\ell')})$, where $\rho$ is the Hamming distance, so
\[
\frac{a^2Kd}{2}\le\sum_{j=1}^K\|\Delta_j\|_F^2\le4a^2Kd.
\]
 
\textbf{Step 3: Separation and KL divergence.}
Since each group has $m=n/K$ points,
\[
\|f_\ell-f_{\ell'}\|_n^2=\frac1n\sum_{j=1}^K\sum_{s=1}^m\|\Delta_jx_{j,s}\|^2=\frac1K\sum_{j=1}^K\operatorname{tr}\big(\Delta_j\Sigma_j\Delta_j^\top\big).
\]
By Assumption~\ref{assump:lower}(i), $\frac{\lambda}{d_{\mathrm{in}}}\|\Delta_j\|_F^2\le\operatorname{tr}(\Delta_j\Sigma_j\Delta_j^\top)\le\frac{1}{d_{\mathrm{in}}}\|\Delta_j\|_F^2$, and therefore, by Step~2,
\begin{align}\label{eqn:lb_sep}
\frac{\lambda a^2d_\out}{2}\le\|f_\ell-f_{\ell'}\|_n^2\le4a^2d_\out .
\end{align}
Under Gaussian noise, the observations are independent with $Y_t\sim\mathcal N(f(x_t),\sigma^2I_{d_\out})$, so the Kullback--Leibler divergence between the laws $P_\ell$ and $P_{\ell'}$ of the sample is
\[
\mathrm{KL}(P_\ell\,\|\,P_{\ell'})=\frac{n}{2\sigma^2}\|f_\ell-f_{\ell'}\|_n^2\le\frac{2na^2d_\out}{\sigma^2}.
\]
 
\textbf{Step 4: Fano's method.}
Choose
\[
a^2:=\frac{\log 2}{256}\cdot\frac{\sigma^2Kd_{\mathrm{in}}}{n}.
\]
Then $\mathrm{KL}(P_\ell\|P_{0})\le\frac{\log 2}{128}\,Kd=\frac1{16}\cdot\frac{Kd}{8}\log2\le\frac1{16}\log M$ for every $\ell$. The condition $a^2d\le R^2$ holds because $n\ge\sigma^2Kd_{\mathrm{in}}d/R^2$. By \eqref{eqn:lb_sep}, the hypotheses are $2s$-separated in $\|\cdot\|_n$ with $s^2:=\lambda a^2d_\out/8$. Theorem~2.5 of \citet{tsybakov2008lower}, applied with $\alpha=1/16$ and $M\ge2$, gives
\[
\inf_{\hat f}\max_{0\le\ell\le M}P_\ell\big(\|\hat f-f_\ell\|_n\ge s\big)\ge\frac{\sqrt M}{1+\sqrt M}\Big(1-\frac18-\sqrt{\frac{1}{8\log M}}\Big)\ge\frac14 .
\]
By Markov's inequality, $\mathbb E_\ell\|\hat f-f_\ell\|_n^2\ge s^2P_\ell(\|\hat f-f_\ell\|_n\ge s)$, so
\[
\inf_{\hat f}\sup_{f\in\mathcal H}\mathbb E\|\hat f-f\|_n^2\ge\inf_{\hat f}\max_{\ell}\mathbb E_\ell\|\hat f-f_\ell\|_n^2\ge\frac{s^2}{4}=\frac{\lambda a^2d_\out}{32}=\frac{\log2}{8192}\cdot\lambda\,\frac{\sigma^2Kd_{\mathrm{in}}d_\out}{n},
\]
which is \eqref{eqn:lower_bound} with $c=\log2/8192$.
\end{proof}

\newpage
\section{Supplementary To \emph{Experiments }}

\subsection{Detailed Experiments on the Theoretical Upper Bounds obtained in Theorems ~\ref{thm:complexity_general}, ~\ref{thm:complexity_top_k} \& ~\ref{thm:complexity_top-p}}\label{app:experiments}

\subsubsection{General MoE Experiments: Details across All Configurations}\label{app:general_experiments}
\paragraph{Excess risk.}
Since The theorem~\ref{thm:complexity_general} bounds the estimation error rather than the total test error, so we remove the irreducible part (noise and approximation error) before plotting. We fit $\mathrm{MSE}(n)=a+b\,n^{-\beta}$ to the test MSE by profile least squares: for each candidate floor $a$ below the smallest observed MSE, $(b,\beta)$ come from a straight-line fit in log-log scale, and we keep the $a$ with the smallest squared residual. We plot the excess test MSE $\mathrm{MSE}(n)-\hat a$ and report the slope of its least-squares line in log-log scale.

\paragraph{Bound shape.}
The dashed curves show
\[
B(n)=\frac{D+Kd_\pre\log\!\big(L\sqrt{d_{\mathrm{out}}}\,n/D\big)+d_\prg\log\!\big(LK B_\infty n/D\big)}{n},
\]
which is the bound of Theorem~\ref{thm:complexity_general}. Unlike the Top-$k$ case, there is no routing term, since all $K$ experts are active. We set $L=1$ and take $B_\infty$ to be the largest recorded expert norm, which enters only inside a logarithm. Because the bound holds only up to absolute constants, $B(n)$ is rescaled to pass through the first data point, so only its slope is meaningful. Over a finite range, the local log-log slope of $\log(Cn/D)/n$ is $-1+1/\log(Cn/D)$, so the bound shape is flatter than $-1$ for moderate $n/D$ and approaches $-1$ only as $n/D\to\infty$.

\paragraph{Results.}
Figure~\ref{fig:general_loglog_full} shows all twelve configurations, and Table~\ref{tab:general_slopes} compares the fitted slopes with the slope of $B(n)$ over the same range of $n$. The bound shape has slope between $-0.80$ and $-0.84$ in every configuration. The empirical slopes lie between $-0.82$ and $-1.42$, with mean $-1.00$ and standard deviation $0.15$. In every configuration the empirical decay is at least as steep as that of $B(n)$ (up to the borderline case $(32,8)$), which is consistent with Theorem~\ref{thm:complexity_general} being an upper bound: the measured excess risk decays at least as fast as the bound predicts. Ten of the twelve fitted slopes are within $0.1$ of the parametric slope $-1$.

Two configurations deviate from $-1$:
\begin{enumerate}
    \item For $(d_{\mathrm{in}},d_{\mathrm{out}})=(4,32)$ the slope is $-1.42$, steeper than the parametric rate. We attribute this to the floor estimate: if $\hat a$ is slightly overestimated, the excess MSE is biased downward at large $n$, which steepens the fitted slope. The floor and the exponent $\beta$ are only weakly identified jointly, and this is the configuration with the smallest excess MSE ($\approx 2\times10^{-5}$ at $n_{\max}$).
    \item For $(16,32)$ and $(32,8)$ the slopes are $-0.83$ and $-0.82$, flatter than $-1$. As in the Top-$k$ experiments, this is plausibly due to a nonvanishing optimization error from a fixed number of AdamW epochs and to the fact that the target is a Transformer, not an MoE, so the best-in-class floor is only approximately reached.
\end{enumerate}
At the largest sample sizes the excess MSE is of order $10^{-4}$--$10^{-3}$, comparable to the fluctuations of the test MSE, which explains the scatter at the right end of several panels.

\begin{table}[h]
    \centering
    \caption{General MoE ($K=4$, dense routing): parameter count $D$, the ratio $n_{\max}/D$ with $n_{\max}=8\times10^5$, the fitted log-log slope of the excess test MSE, and the slope of the bound shape $B(n)$ over the same range of $n$.}
    \label{tab:general_slopes}
    \begin{tabular}{cccccc}
        \hline
        $d_{\mathrm{in}}$ & $d_{\mathrm{out}}$ & $D$ & $n_{\max}/D$ & empirical slope & bound-shape slope \\
        \hline
        4  & 8  & 1988 & 402 & $-1.03$ & $-0.84$ \\
        4  & 16 & 3044 & 263 & $-1.07$ & $-0.83$ \\
        4  & 32 & 5156 & 155 & $-1.42$ & $-0.82$ \\
        8  & 8  & 2628 & 304 & $-0.99$ & $-0.84$ \\
        8  & 16 & 3684 & 217 & $-0.90$ & $-0.83$ \\
        8  & 32 & 5796 & 138 & $-0.96$ & $-0.81$ \\
        16 & 8  & 3908 & 205 & $-0.99$ & $-0.82$ \\
        16 & 16 & 4964 & 161 & $-0.98$ & $-0.82$ \\
        16 & 32 & 7076 & 113 & $-0.83$ & $-0.81$ \\
        32 & 8  & 6468 & 124 & $-0.82$ & $-0.81$ \\
        32 & 16 & 7524 & 106 & $-0.98$ & $-0.81$ \\
        32 & 32 & 9636 & 83 & $-0.99$ & $-0.80$ \\
        \hline
        \multicolumn{4}{r}{mean (std.)} & $-1.00$ $(0.15)$ & $-0.82$ \\
        \hline
    \end{tabular}
\end{table}

\begin{figure}
    \centering
    \includegraphics[width=0.8\linewidth]{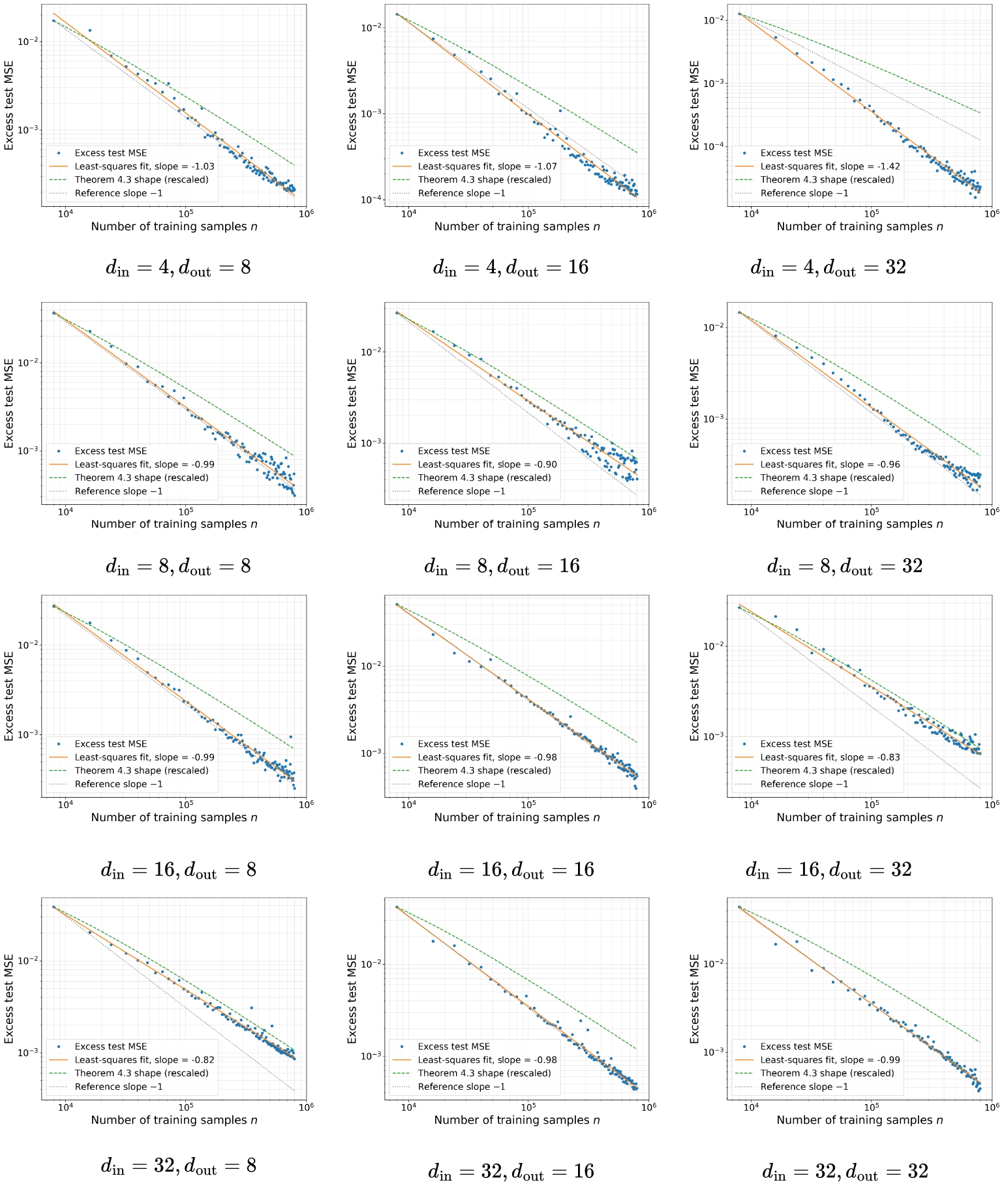}
    \caption{Excess test MSE versus $n$ (log-log) for the general (dense) MoE ($K=4$) in all twelve configurations. Points: excess test MSE $\mathrm{MSE}(n)-\hat a$; solid: least-squares fit (slope in legend); dashed: bound shape $B(n)$ of Theorem~\ref{thm:complexity_general}, rescaled to the first point; dotted: reference slope $-1$.}
    \label{fig:general_loglog_full}
\end{figure}


\newpage
\subsubsection{Top-$k$ Experiments: Details across All Configurations}\label{app:topk_experiments}
\paragraph{Excess risk.}
The Theorem~\ref{thm:complexity_top_k} bound the estimation error, not the total test error, so we remove the irreducible part (noise and approximation error) before plotting. We fit $\mathrm{MSE}(n)=a+b\,n^{-\beta}$ to the test MSE by profile least squares. For each candidate floor $a$ below the smallest observed MSE, $(b,\beta)$ are obtained from a straight-line fit in log--log scale, and we keep the $a$ with the smallest squared residual. We then plot the excess test MSE, $\mathrm{MSE}(n)-\hat a$, and report the slope of its least-squares line in log--log scale. Squared errors are used throughout, matching the squared loss in Theorem~\ref{thm:complexity_top_k}; averaging over the $d_{\mathrm{out}}$ output coordinates changes the MSE only by a constant factor, which does not affect slopes.
 
\paragraph{Bound shape.}
The dashed curves show
\[
B(n)=\frac{D+Kd_\pre\log\!\big(L\sqrt{d_{\mathrm{out}}}\,n/D\big)+d_\prg\log\!\big(LKB_\infty n/D\big)}{n},
\]
which is the bound of Theorem~\ref{thm:complexity_top_k} without its routing term $d_N(\log n+2k\log(eK/k))/n$. The routing term is omitted because the Natarajan dimension of a GELU gate is not computable. We set $L=1$ and take $B_\infty$ to be the largest recorded expert norm; $B_\infty$ enters only inside a logarithm, and varying it between $1$ and $20$ changes the slopes in Table~\ref{tab:topk_slopes} by at most $0.02$. Because the bound holds only up to absolute constants, $B(n)$ is rescaled to pass through the first data point, so only its slope is meaningful. Over a finite range, the local log--log slope of $\log(Cn/D)/n$ is $-1+1/\log(Cn/D)$; the bound shape is therefore flatter than $-1$ for moderate $n/D$, and approaches $-1$ only as $n/D\to\infty$.
 
\paragraph{Results.}
Figure~\ref{fig:top_k_loglog_full} shows all twelve configurations, and Table~\ref{tab:topk_slopes} compares the fitted slopes with the slope of $B(n)$ over the same range of $n$. The bound shape has slope between $-0.81$ and $-0.85$ in every configuration. The empirical slopes lie between $-0.62$ and $-1.09$, with mean $-0.83$ and standard deviation $0.13$, so on average the empirical and theoretical rates agree closely. In six of the twelve configurations the empirical slope is steeper than that of $B(n)$. This is consistent with Theorem~\ref{thm:complexity_top_k}, which is an upper bound and need not be tight for a particular target. In the remaining configurations the decay is flatter, most visibly for $(16,16)$ and $(32,8)$. Over a finite range of $n$ this does not contradict an upper bound that holds up to constants, and we attribute it to three effects that the theory does not model:
\begin{enumerate}
    \item The floor $\hat{a}$ and the exponent $\beta$ are only weakly identified jointly. A slightly underestimated floor leaves a constant offset in the excess MSE, which flattens the fitted slope at large $n$.
    \item The networks are trained by a fixed number of AdamW epochs rather than solved to global optimality, so the measured error includes an optimization error that need not decay as $1/n$.
    \item The target is not an MoE, so the best-in-class predictor, and hence the floor, is itself only approximately reached.
\end{enumerate}
At the largest sample sizes the excess MSE is of order $10^{-4}$--$10^{-3}$, comparable to the fluctuations of the test MSE, which explains the increased scatter at the right end of several panels.
 
\begin{table}[h]
    \centering
    \caption{Top-$k$ MoE ($K=4$, $k=2$): parameter count $D$, the ratio $n_{\max}/D$ with $n_{\max}=8\times10^5$, the fitted log--log slope of the excess test MSE, and the slope of the bound shape $B(n)$ over the same range of $n$.}
    \label{tab:topk_slopes}
    \begin{tabular}{cccccc}
        \hline
        $d_{\mathrm{in}}$ & $d_{\mathrm{out}}$ & $D$ & $n_{\max}/D$ & empirical slope & bound-shape slope \\
        \hline
        4  & 8  & 1988 & 402 & $-0.88$ & $-0.85$ \\
        4  & 16 & 3044 & 263 & $-0.92$ & $-0.84$ \\
        4  & 32 & 5156 & 155 & $-1.09$ & $-0.83$ \\
        8  & 8  & 2628 & 304 & $-0.95$ & $-0.84$ \\
        8  & 16 & 3684 & 217 & $-0.81$ & $-0.84$ \\
        8  & 32 & 5796 & 138 & $-0.76$ & $-0.83$ \\
        16 & 8  & 3908 & 205 & $-0.84$ & $-0.83$ \\
        16 & 16 & 4964 & 161 & $-0.67$ & $-0.83$ \\
        16 & 32 & 7076 & 113 & $-0.94$ & $-0.82$ \\
        32 & 8  & 6468 & 124 & $-0.62$ & $-0.81$ \\
        32 & 16 & 7524 & 106 & $-0.78$ & $-0.82$ \\
        32 & 32 & 9636 & 83  & $-0.74$ & $-0.82$ \\
        \hline
        \multicolumn{4}{r}{mean (std.)} & $-0.83$ $(0.13)$ & $-0.83$ \\
        \hline
    \end{tabular}
\end{table}

\begin{figure}
    \centering
    \includegraphics[width=0.8\linewidth]{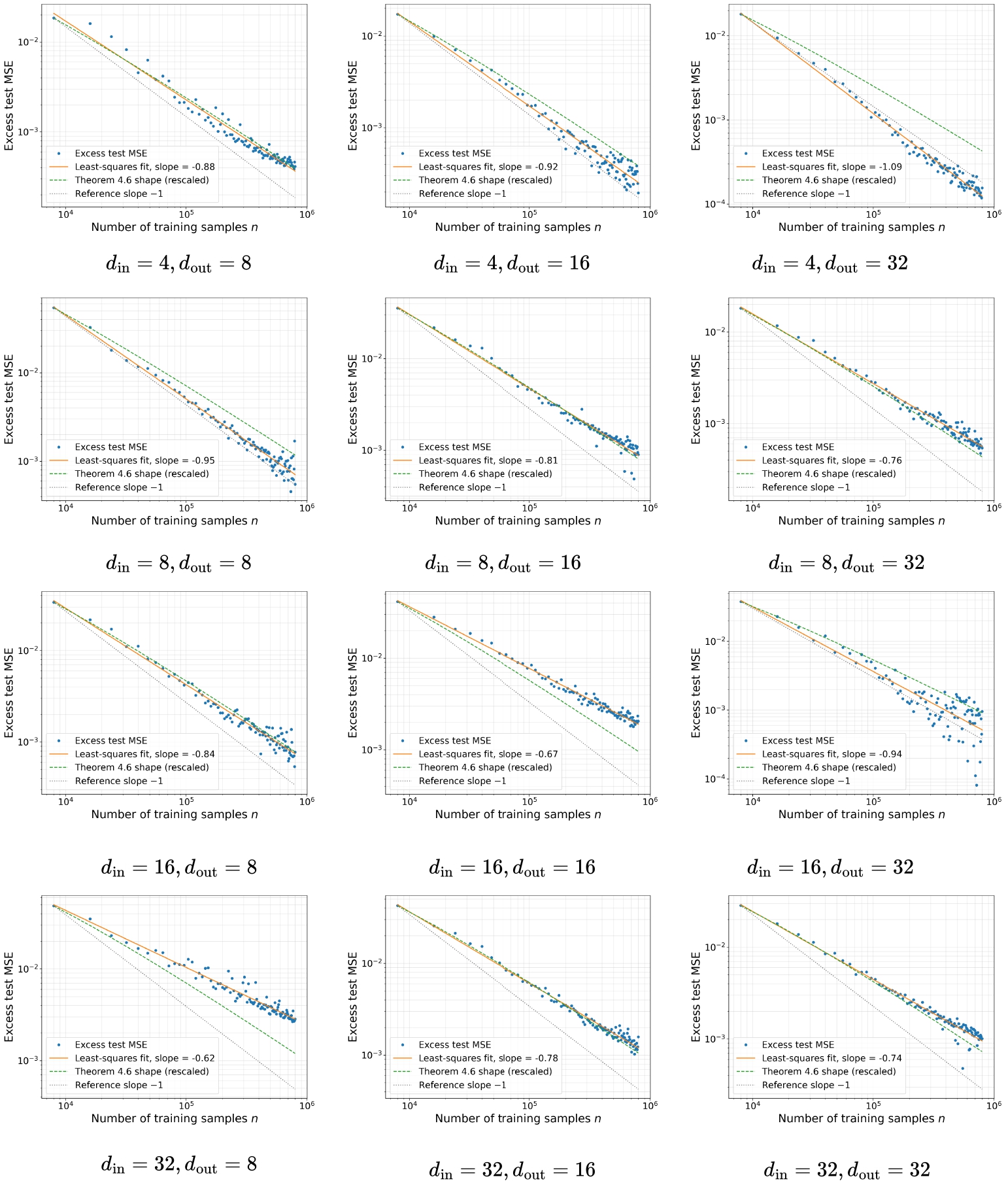}
    \caption{Excess test MSE versus $n$ (log--log) for the Top-$k$ MoE ($K=4$, $k=2$) in all twelve configurations $(d_{\mathrm{in}},d_{\mathrm{out}})\in\{4,8,16,32\}\times\{8,16,32\}$. Points: excess test MSE $\mathrm{MSE}(n)-\hat a$; solid: least-squares fit (slope in legend); dashed: bound shape $B(n)$ of Theorem~\ref{thm:complexity_top_k}, rescaled to the first point; dotted: reference slope $-1$.}
    \label{fig:top_k_loglog_full}
\end{figure}

\newpage 
\subsubsection{Top-$p$ Experiments: Details across All Configurations}\label{app:topp_experiments}

As before, our MoE has $K=4$ experts and Top-$p$ routing with threshold $p=0.7$: at each input, the smallest set of largest gating weights whose cumulative mass reaches $p$ is activated, and the selected weights are renormalized. By Step~0 of the proof of Theorem~\ref{thm:complexity_top-p}, at most $\kappa=\lceil Kp\rceil=3$ experts are active at any input. Each expert and the gate have one hidden layer of width $d_h=32$ with GELU activations. 
The parameter counts $d_\pre$, $d_\prg$ and $D=Kd_\pre+d_\prg$ are computed as in Table~\ref{tab:topp_slopes} with $d_h=32$.
 
\paragraph{Bound shape.}
The dashed curves show the bound of Theorem~\ref{thm:complexity_top-p} without its routing term $d_{N,p}(\log n+2\kappa\log(eK/\kappa))/n$:
\[
B_p(n)=\frac{D+Kd_\pre\log\!\big(L\sqrt{d_{\mathrm{out}}}\,n/D\big)+d_\prg\log\!\big(L\kappa B_\infty n/(pD)\big)}{n}.
\]
As in the Top-k experiments, we set $L=1$, take $B_\infty$ to be the largest recorded expert norm, and rescale $B_p(n)$ to pass through the first data point, so as to make its slope meaningful. Varying $B_\infty$ between $1$ and $20$ changes the slopes in Table~\ref{tab:topp_slopes} by at most $0.03$. Compared with the Top-$k$ bound, the threshold $p$ enters only through the factor $\kappa/p\le K+1/p$ inside the gating logarithm, and through the routing term.

\paragraph{Results.}
Figure~\ref{fig:top-p_loglog_full} shows all twelve configurations, and Table~\ref{tab:topp_slopes} compares the fitted slopes with the slope of $B_p(n)$ over the same range of $n$. The bound shape has a slope between $-0.78$ and $-0.83$. The empirical slopes lie between $-0.53$ and $-1.10$, with mean $-0.81$ and standard deviation $0.14$. As for Top-$k$, the empirical and theoretical rates therefore agree on average:
\begin{itemize}
    \item in six configurations the empirical decay is steeper than $B_p(n)$, and in $(4,8)$ the two slopes coincide, which is consistent with Theorem~\ref{thm:complexity_top-p} being an upper bound;
    \item in the remaining five the decay is flatter.
\end{itemize}
The clearest flat case is $(d_{\mathrm{in}},d_{\mathrm{out}})=(32,8)$, with slope $-0.53$. This configuration is also the flattest in the Top-$k$ experiments (slope $-0.62$, Table~\ref{tab:topk_slopes}). Since it is the same configuration under both routing rules, the deviation is most likely due to the target and to optimization rather than to the routing mechanism. The same three caveats apply as in Appendix~\ref{app:topk_experiments}: the floor and the exponent are only weakly identified jointly, the networks are trained by a fixed number of AdamW epochs rather than solved to optimality, and the Transformer target is not an MoE.
 
 
\begin{table}[h]
    \centering
    \caption{Top-$p$ MoE ($K=4$, $p=0.7$, $\kappa=3$, $d_h=32$): parameter count $D$, the ratio $n_{\max}/D$ with $n_{\max}=8\times10^5$, the fitted log--log slope of the excess test MSE, and the slope of the bound shape $B_p(n)$ over the same range of $n$.}
    \label{tab:topp_slopes}
    \begin{tabular}{cccccc}
        \hline
        $d_{\mathrm{in}}$ & $d_{\mathrm{out}}$ & $D$ & $n_{\max}/D$ & empirical slope & bound-shape slope \\
        \hline
        4  & 8  & 1988  & 402 & $-0.83$ & $-0.83$ \\
        4  & 16 & 3044  & 263 & $-0.97$ & $-0.82$ \\
        4  & 32 & 5156 & 155  & $-1.10$ & $-0.81$ \\
        8  & 8  & 2628  & 304 & $-0.79$ & $-0.82$ \\
        8  & 16 & 3684  & 217 & $-0.74$ & $-0.81$ \\
        8  & 32 & 5796 & 138  & $-0.76$ & $-0.81$ \\
        16 & 8  & 3908  & 205 & $-0.84$ & $-0.80$ \\
        16 & 16 & 4964  & 161  & $-0.82$ & $-0.80$ \\
        16 & 32 & 7076 & 113  & $-0.65$ & $-0.80$ \\
        32 & 8  & 6468 & 124  & $-0.53$ & $-0.78$ \\
        32 & 16 & 7524 & 106  & $-0.84$ & $-0.79$ \\
        32 & 32 & 9636 & 83  & $-0.87$ & $-0.79$ \\
        \hline
        \multicolumn{4}{r}{mean (std.)} & $-0.81$ $(0.14)$ & $-0.80$ \\
        \hline
    \end{tabular}
\end{table}

\begin{figure}
    \centering
    \includegraphics[width=0.8\linewidth]{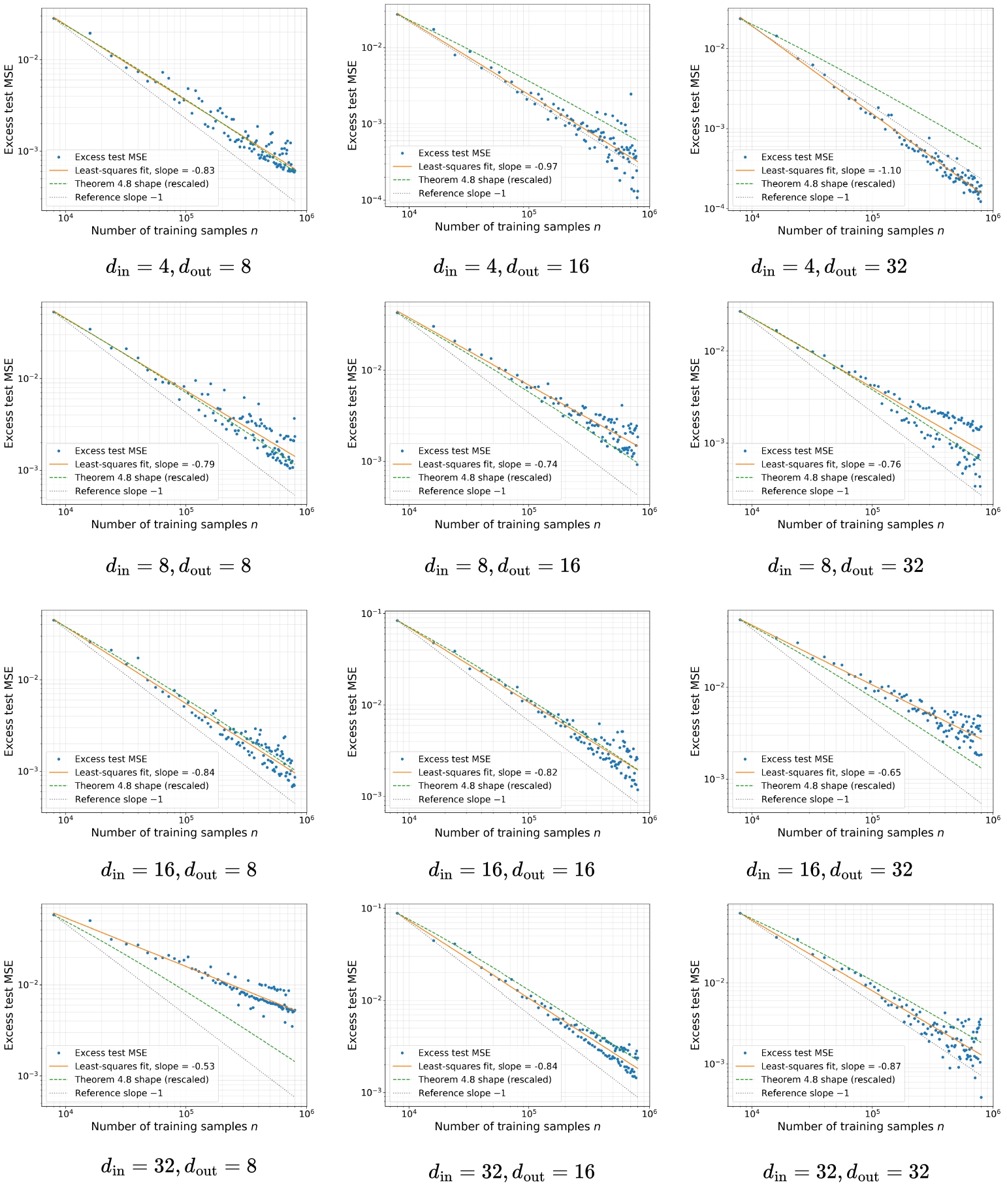}
    \caption{Excess test MSE versus $n$ (log--log) for the Top-$p$ MoE ($K=4$, $p=0.7$) in all twelve configurations $(d_{\mathrm{in}},d_{\mathrm{out}})\in\{4,8,16,32\}\times\{8,16,32\}$. Points: excess test MSE $\mathrm{MSE}(n)-\hat a$; solid: least-squares fit (slope in legend); dashed: bound shape $B_p(n)$ of Theorem~\ref{thm:complexity_top-p}, rescaled to the first point; dotted: reference slope $-1$.}
    \label{fig:top-p_loglog_full}
\end{figure}

\newpage 

\subsubsection{Top-$p$ Routing: Average Number of Expert Selections across All Combinations}\label{app:top-p_shannon_full}

\paragraph{Quantity.}
For each training size $n$, the Top-$p$ MoE ($K=4$, $p=0.7$) is evaluated on the test set. For a test input $x$, the gate probabilities are sorted in decreasing order and $k_x$ is the smallest number of experts whose cumulative probability reaches $p$. Figure~\ref{fig:top-p_shannon_full} shows the test-set average $\bar k_x$ against $n$ for all twelve configurations. The dotted line marks hard routing ($k_x=1$) and the dashed line marks $\lceil Kp\rceil=3$, the worst-case number of active experts used in Theorem~\ref{thm:complexity_top-p}. Since $k_x\ge1$ for every input, the fraction of test inputs with $k_x\ge2$ is at most $\bar k_x-1$.

\paragraph{Results.}
In all twelve configurations $\bar k_x$ decreases with $n$, mostly before $n\approx10^{5}$, and then stays roughly flat. At the smallest training size ($n=8\times10^{3}$) it lies between about $1.2$ and $1.6$, and at $n=8\times10^{5}$ it lies between $1.03$ and $1.17$ (Table~\ref{tab:shannon_entropy_top-p}). Hence at the largest sample size at most $3\%$ to $17\%$ of test inputs use more than one expert, depending on the configuration. The curves never come close to the worst-case count $\lceil Kp\rceil=3$, so this count is conservative for trained routers. The final values tend to be larger for larger $d_{\mathrm{out}}$ and $d_{\mathrm{in}}$ (about $1.03$ for $(4,8)$ and $1.17$ for $(16,32)$), but the ordering is not strict. The curves are also noisier at large $n$ for $d_{\mathrm{out}}=32$, where $\bar k_x$ fluctuates by about $0.05$ between neighboring training sizes. The entries of Table~\ref{tab:shannon_entropy_top-p} are single-point values at $n=8\times10^{5}$ and share this fluctuation.

\paragraph{Interpretation.}
In all configurations the learned Top-$p$ router is close to hard routing, so the rate in $n$ reported in Section~\ref{sec:experiments} is measured in a regime where multi-expert routing is used on a small fraction of inputs. These runs therefore do not probe the dependence of the bound on $p$ or on the number of active experts. We report the decrease of $\bar k_x$ as an observation and do not attribute it to a specific cause. It may reflect the gate becoming sharper with more data, or optimization dynamics under a fixed number of epochs. Each curve comes from a single training run per training size.

\begin{figure}[h]
    \centering
    \includegraphics[width=0.8\linewidth]{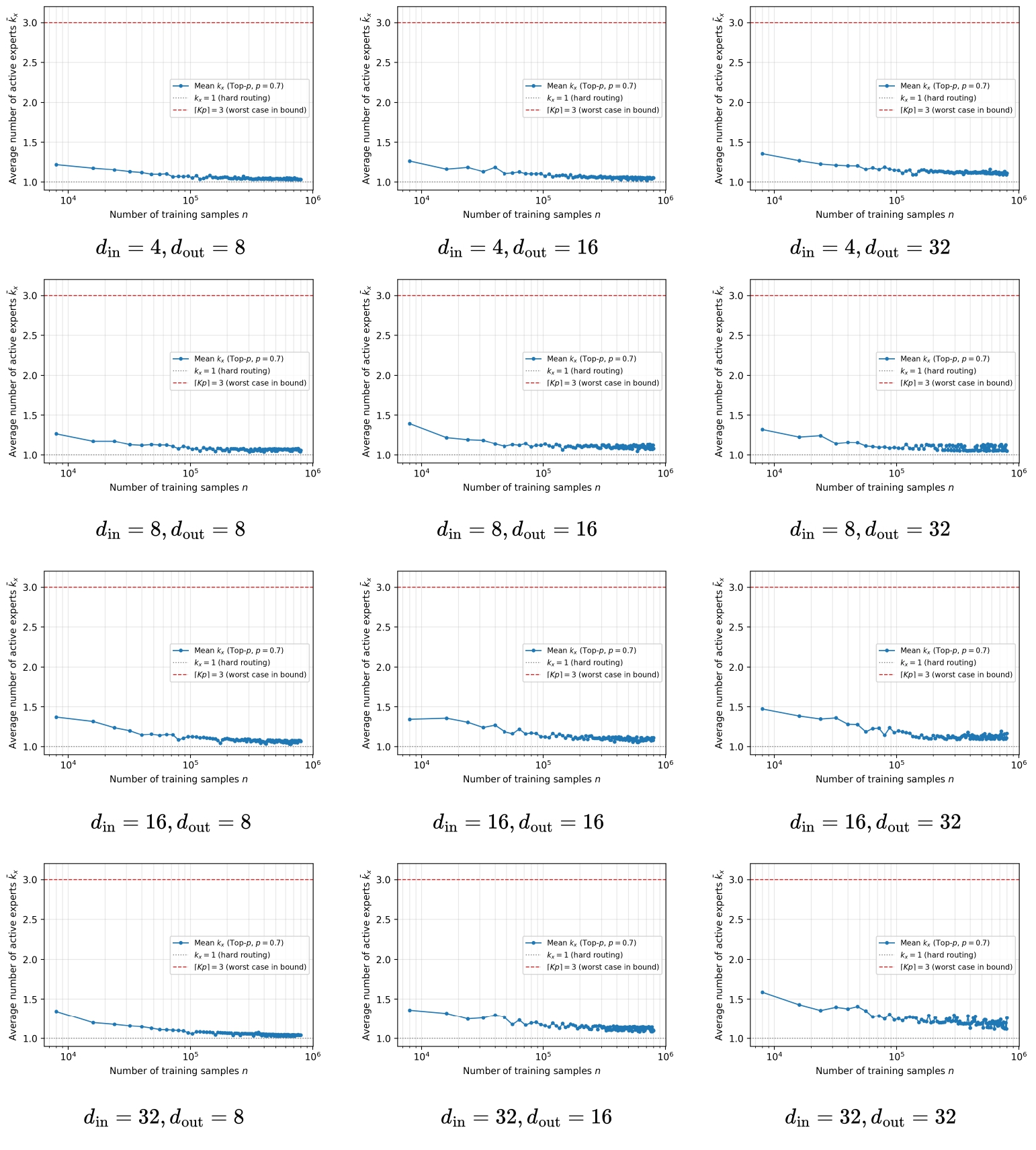}
    \caption{Average number of experts $\bar k_x$ selected by Top-$p$ routing ($K=4$, $p=0.7$) on the test set versus training size $n$, for all twelve configurations $(d_{\mathrm{in}},d_{\mathrm{out}})\in\{4,8,16,32\}\times\{8,16,32\}$. Dotted: hard routing ($k_x=1$); dashed: worst-case count $\lceil Kp\rceil=3$ used in the bound.}
    \label{fig:top-p_shannon_full}
\end{figure}














\end{document}